\newtheorem{thm}{Theorem}[section]
\newtheorem{cor}[thm]{Corollary}
\newtheorem{lem}[thm]{Lemma}
\newtheorem{prop}[thm]{Proposition}
\newtheorem{rem}[thm]{Remark}
\newcommand{\mat}[1]{\mathbf{#1}}
\newcommand{\Real}{\mathbb R}
\newcommand{\tr}[1]{\text{tr}[#1]}
\newcommand{\be}{\begin{equation}}
\newcommand{\ee}{\end{equation}}
\newcounter{ToDo}
\newcounter{guocomm}
\newcounter{Note}
\definecolor{blue-violet}{rgb}{0.54, 0.17, 0.89}
\definecolor{mygreen}{rgb}{0.0, 0.5, 0.0}
\definecolor{awesome}{rgb}{1.0, 0.13, 0.32}
\definecolor{bostonuniversityred}{rgb}{0.8, 0.0, 0.0}
\journal{Pattern Recognition}
\begin{document}

\begin{frontmatter}





%

\title{Cloud removal Using Atmosphere Model}

\author[1]{Yi Guo\corref{cor1}}
\ead{y.guo@westernsydney.edu.au}
\cortext[cor1]{Corresponding author}
\author[2]{Feng Li}
\author[2]{Zhuo Wang}

\address[1]{Centre for Research in Mathematics and Data Science, School of Computing, Engineering and Mathematics, Western Sydney University, Parramatta, NSW 2150, Australia
}
\address[2]{Qian Xuesen Laboratory of Space Technology, Beijing 100094, China}

\begin{abstract}
Cloud removal is an essential task in remote sensing data analysis. As the image sensors are distant from the earth ground, it is likely that part of the area of interests is covered by cloud. Moreover, the atmosphere in between creates a constant haze layer upon the acquired images. To recover the ground image, we propose to use scattering model for temporal sequence of images of any scene in the framework of low rank and sparse models. We further develop its variant, which is much faster and yet more accurate. To measure the performance of different methods {\em objectively}, we develop a semi-realistic simulation method to produce cloud cover so that various methods can be quantitatively analysed, which enables detailed study of many aspects of cloud removal algorithms, including verifying the effectiveness of proposed models in comparison with the state-of-the-arts, including deep learning models, and addressing the long standing problem of the determination of regularisation parameters. The latter is companioned with  theoretic analysis on the range of the sparsity regularisation parameter and verified  numerically. 
\end{abstract}

%

\begin{keyword}
	
	
	Robust Principal Component Analysis\sep Sparse Models\sep Scattering Model\sep Deep Learning
\end{keyword}

\end{frontmatter}


\section{Introduction}\label{sec:intro}
In this paper, we concern about the satellites imagery. As the imaging sensors are deployed kilometres above the earth ground, clouds usually appear in the acquired images. The clouds are nuisance for data analysis tasks. It is desirable to remove the cloud totally to recover clean ground scene, which gives rise to cloud removal. Due to the versatility of remote sensing imagery, cloud removal methods have to align to the characteristics of the sensors, for example, multiple channels or single band. Meanwhile, the platform is a decisive factor for the design of the algorithm, for example, the computation limitation and power consumption restriction. Furthermore, the analysis tasks after cloud removal has some influence as well. So one has to consider all possible contributing factors in the modelling process. 

Our data is single band satellites images of the same scene sampled from different time points which are subjected to light to moderate cloud covering randomly at various regions. The aim is to recover images without cloud, i.e. the clear images revealing the ground scene so that subsequent analysis can be performed reliably, for example, object detection and tracking. Therefore the fidelity is the most important factor to be considered, in other words, the recovered must be as close as possible to the truth, not just simply ``visually fit'' (look plausible from afar). Unfortunately, there is no objective assessment except visual checking, and one of the goals of this paper is to fill this gap. 

We focus on non-deep-learning based methods for cloud removal, although latest deep learning methods were used as contenders in our empirical studies subject to code availability, for example \cite{zhengSingleImageCloud2021} and \cite{Sarukkai_2020_WACV}. The reason for this is that the fidelity of the recovered images is a concern for deep learning based methods. The workflow of these methods consists of two steps. The first is to identity cloud covered areas and remove them. The second is to apply generative models to fill the removed pixels. Generalised adversial networks (GAN) based models are popular choice for image completion. However, the working mechanism of GAN and its variants, heavily relies on the training data on which the distribution is modelled by transforming a specified random distribution, e.g. uniform distribution or multivariate Gaussian distribution. Essentially, GAN is some sort of density estimator. Then the question is, what if the scene that the satellite sampled never appears in the training data? GAN will certainly generate something for the missing areas but will not be able to stretch outside its modelled distribution even it is conditioned on some posterior. Therefore we consider other alternatives, for example, temporal mosaicing \cite{guoCloudFilteringLandsat2016b,guoMultipleTemporalMosaicing2017b}. Although enforcing spatial smoothness is the most time consuming component, the fidelity can be reassured that no ``alien pixels'' will be inserted into the images like GAN based methods do. Another possibility is matrix completion methods for missing pixel filling, for example, \cite{wenTwoPassRobustComponent2018} and its later development \cite{zhangCoarsetoFineFrameworkCloud2019}. The main model behind these methods is the low rank robust principal component analysis \cite{CandesLiMaWright2010} coming from a long development of robust PCA (RPCA) \cite{delatorreRobustPrincipalComponent2001,GaoKwanGuo2009} that is the efforts to improve the robustness of the linear PCA model by reducing the sensitivity to outliers. The elegance of RPCA comparing to its peers is the simplicity in its formation as well as its theoretical guarantee for the recovery of the low rank signals and sparse noise. The application of RPCA implies that the observed images are the summation of low rank ground images and sparse cloud cover images (images with cloud only without background). It makes sense for such arrangement assuming that the ground scene changes little after excluding misalignment and geometric distortion, and clouds cover only small portion of the scene.  The low rank condition on ground component signals the way of filling missing pixels and hence RPCA has better interpretability than GAN methods. 

It seems that the aforementioned two-step workflow should be able to be consolidated to a single one using RPCA. Nonetheless this two-step strategy was still adopted for no obvious reason, in which RPCA is only used for cloud identification and a low rank matrix completion follows after those cloud affected areas masked out. Two questions remains though. Firstly,   where is the atmosphere modelled in the image data?
The atmosphere is reflected as a thin haze layer in the acquired images which may not be negligible. Secondly, is the simple additive model in RPCA really the right description of the physics? Apparently not. The most realistic model so far is the so-called atmosphere scattering model \cite{narasimhanVisionAtmosphere2002} for satellite images. Therefore one should build atmospheric affect into the model for cloud removal and ground images recover.

\section{Models considering atmosphere effects}\label{sec:model}
Before presenting proposed ones, we first describe RPCA based methods here in the setting of imagery applications. Let $ I_i \in \Real^{d_1\times d_2}$ be the $i$-th sampled image of size $d_1\times d_2$ and $i=1,\ldots,n$; $D=[vec(I_1), \ldots, vec( I_n)]$ where $vec(X)$ is the vectorisation of matrix $X$ to be a column vector, and hence $ D \in \Real ^{d\times n}$ ($d=d_1d_2$). The RPCA model shared in \cite{wenTwoPassRobustComponent2018,zhangCoarsetoFineFrameworkCloud2019} is the following, 
\begin{align}\label{e:rpca}
\min_{L,C} & \|L\|_* + \lambda \|C\|_1 \\
\text{s.t. }&D = L + C\notag 
\end{align}
where $\|X\|_*$ is the nuclear norm  of $X$, i.e. the summation of all singular values of $X$, which is the convex envelope for matrix rank, $\|X\|_1$ is the $\ell_1$ norm of $X$,  $L$ is the initial recovered ground images, $C$ is the cloud cover images, and both are the same size as $D$. $\lambda$ is the regularisation parameter usually fixed to be $\frac1{\sqrt{d}}$ as recommended in \cite{CandesLiMaWright2010}. 
By introducing group sparsity (defined by super-pixels) and alignment into \eqref{e:rpca},  \cite{zhangCoarsetoFineFrameworkCloud2019} claims slightly better performance. After solving \eqref{e:rpca}, both methods proceed to matrix completion with the mask derived from $C$ as follows 
\begin{align}\label{e:mc}
\min_{B,S} & \|L\|_* +  \alpha\|S_{\Omega}\|_1 + \beta \|S_{\bar\Omega}\|_1 \\
\text{s.t. }&D = B + S\notag 
\end{align}
where $\Omega$ is the mask matrix of size $d\times n$ with 0's for masked out elements and 1's for others, $\bar\Omega$ is the negated version of $\Omega$, i.e. flipping 0's and 1's, and $S_{\Omega}$ is the projection of $S$ on $\Omega$, i.e. masking out elements indicated by 0's in $\Omega$. The $ij$th element in the mask matrix, $[\Omega]_{ij}=1$ if $[C]_{ij}>\gamma\sigma(vec(C))$ and $[\Omega]_{ij}=0$ otherwise, where $\sigma(v)$ is the standard deviation of $v$ and $\gamma\in[0,1]$ is a pre-set ratio. $B$ is the final recovered ground images, which are supposed to be cloud free. $S$ is the noise. In implementation,  $\gamma=0.8$, $\alpha=\frac{0.1}{\sqrt{d}}$ and $\beta=1$. Both problems are convex with two blocks of variables. There are many gradient projection based solvers/optimisers for them under the ADMM framework \cite{BoydVandenberghe2004}. 
They all work reasonably well for moderate size of images, for example, $d_1=d_2=1024$ and $n=7$. 

The critical step is in \eqref{e:rpca} where cloud cover $C$ is supposed to be separated. Note that the decomposition of the observed data $D = L + C$ reflects the basic model assumption. As mentioned earlier, this departures from the reality by ignoring atmosphere effect. So instead of simple additive model we propose to use atmosphere scattering  \cite{narasimhanVisionAtmosphere2002}, $D = L\circ(1-C) + C$, in the modelling, and hence optimising the following
\begin{align}\label{e:atm}
\min_{L,C} & \|L\|_* + \lambda \|C\|_1 \\
\text{s.t. }&D =L\circ(1-C) + C \notag \\
&[L]_{ij}\in[0,1],\ [C]_{ij}\in[0,1]\notag
\end{align}
where $X\circ Y$ is the element-wise product of matrix $X$ and $Y$ of the same size. In the above formulation, it is assumed that the pixels in observed images are rescaled to $[0,1]$, which is easily done by dividing the maximum digital number of the sensor, but {\it not} the maximum of the observed values. Note that \eqref{e:atm} is no longer a convex problem as the equality condition is not affine. It is supposed to be much difficult to solve on itself, let alone the boxed conditions clamping the elements in both $L$ and $C$ within $[0,1]$. Nonetheless, there is still some strategies for the optimisation. Fore example, introducing a dummy variable $X$ to untangle the interaction between $L$ and $C$
\begin{align}\label{e:atm_op1}
\min_{L,C} & \|L\|_* + \lambda \|C\|_1 \\
\text{s.t. }&D =X\circ(1-C) + C\notag \\
& L = X \notag \\
&[L]_{ij}\in[0,1],\ [C]_{ij}\in[0,1],\ [X]_{ij}\in[0,1]\notag
\end{align}
and proceed with the normal ADMM. However, we observed that this does not converge well enough to be practically useful. Instead, we employ linearisation using primal accelerated proximal gradient method \cite{PongTsengJiYe2010} for its ease in handling entangled nuclear norm optimisation and stability. The Lagrange of \eqref{e:atm} with proximity is 
\begin{align}\label{e:atmlagragian1}
\mathcal L = &\|L\|_* + \lambda \|C\|_1 + \langle Y, D - L\circ(1-C) - C \rangle \\
&+\frac\mu2\|D - L\circ(1-C) - C\|_F^2 \notag
\end{align}
leading to 
\be\label{e:atmlagragian}
\mathcal L = \|L\|_* + \lambda \|C\|_1 +\frac\mu2\|D - L\circ(1-C) - C+\frac Y{\mu}\|_F^2
\ee
by ignoring constants, where $\|X\|_F$ is the Frobenius norm of $X$, $Y\in\Real^{d\times n}$ is Lagrangian parameters for the equality condition and $\mu\ge 0$ is the proximity coefficient. Note that  \eqref{e:atmlagragian1} and \eqref{e:atmlagragian} are the proximal form and the boxed conditions in \eqref{e:atm} are ignored at this stage, which will be handled later by feasibility projection after updating all unknowns. Alternating the minimisation w.r.t. $L$ and $C$ is adopted here. Apparently minimising $\mathcal L$ with respect to $L$ is difficult due to the term $L\circ(1-C)$ although no much trouble for $C$. The gradients are shown below. 
\begin{align}
\frac{\partial \mathcal L }{\partial C} = \lambda\partial\|C\|_1 - \mu(D - L\circ(1-C) - C+\frac Y{\mu})\circ(1-L)  \label{e:gradLtoC} \\
\frac{\partial \mathcal L }{\partial L} = \partial\|L\|_* - \mu(D - L\circ(1-C) - C+\frac Y{\mu})\circ(1-C)  \label{e:gradLtoL}
\end{align}
where $\partial\|X\|_1$ and $\partial\|X\|_*$ are subgradients of $\ell_1$ norm and nuclear norm respectively. The stationary point of \eqref{e:gradLtoC} gives closed form solution $C^*$
\be\label{e:updateC}
[C^*]_{ij} = \left\{
\begin{array}{ll}
	0, &b_{ij}\in[-\lambda,\lambda] \\
	\frac{\lambda-|b_{ij}|}{a_{ij}} sign(b_{ij}), &\text{otherwise}
\end{array}
\right.
\ee
where $a_{ij} = \mu[(L-1)\circ(L-1)]_{ij}$, $b_{ij}=\mu[(D-L+\frac Y{\mu})\circ(L-1)]_{ij}$ and $sign(x)$ is the sign function of $x$ which takes 1 when $x>0$ and $-1$ when $x<0$. It is a straightforward soft thresholding for $\ell_1$ norm minimisation. The only difference is the regularisation is not global but local or adaptive as the regularisation parameter $\lambda$ is rescaled by each $1/a_{ij}$ as shown in \eqref{e:updateC}. Whereas there is no closed form solution for $\frac{\partial \mathcal L }{\partial L} = 0$ because the singular value thresholding (SVT)\cite{caiSingularValueThresholding2010} only works for the following general form 
\[
\tau\partial\|X\|_* + X -A = 0
\]
where $\tau>0$ is an arbitrary scaler (normally regularisation parameter) and $A$ is a matrix the size as $X$. The solution $X^*$ to above is $X^*=\mathcal S_{\tau}(A)$ and $\mathcal S_{\tau}(A)$ is the so-called SVT operator defined as 
\be\label{e:svt}
\mathcal S_{\tau}(A) = U\left(
\begin{array}{cccc}
	\max(\sigma_1-\tau,0) & & &\\
	&\ddots & \\
	& & \max(\sigma_n-\tau,0)
\end{array}
\right)V^\top
\ee
where $U$ and $V$ are from SVD of $A$, i.e. $A = U\Sigma V^\top$ and $\Sigma =\text{diag}(\sigma_1,\ldots,\sigma_n)$. 

To work around it, we linearise the smooth part in \eqref{e:atmlagragian}  
\be
\frac\mu2\|D - L\circ(1-C) - C+\frac Y{\mu}\|_F^2\equiv f_s(L) 
\ee
by the first order Taylor expansion with proximal term w.r.t $L^k$, the $k$th value of $L$ in the iterative optimisation for \eqref{e:atmlagragian}, and optimise $L$ while holding other variables constant as
\[
\min_{L} \|L\|_* + \langle  \frac{\partial f_s }{\partial L}|_{L^k}, L - L^k \rangle + \frac{\ell_p}2\|L - L^k\|_F^2.
\]
In above, $ \frac{\partial f_s }{\partial L}|_{L^k} = \mu(D-L^k\circ(1-C)-C+\frac{Y}\mu)\circ(1-C)$. $\ell_p$ is the Lipschitz constant of $f_s(L)$,  which is the operator norm of $ \langle  \frac{\partial f_s }{\partial L}|_{L^k}, \cdot \rangle$ that maps a matrix of the same size of $L$ to $\Real$
\begin{align*}
\langle  \frac{\partial f_s }{\partial L}|_{L^k}, \cdot \rangle:\ \Real^{n\times d}&\rightarrow \Real\\
A&\mapsto \langle  \frac{\partial f_s }{\partial L}|_{L^k}, A \rangle
\end{align*}
It is straightforward to see that $\ell_p=1$ due to the box conditions of $C$ and $L$. This leads to
\be\label{e:linl}
\min_{L} \|L\|_* + \frac{\ell_p}2\|L - L^k + \frac{\partial f_s}{\partial L}|_{L^k}/{\ell_p}\|_F^2
\ee

The above linear approximation results is very convenient as interaction between $L$ and $C$ has been removed and therefore \eqref{e:linl} has closed form solution using SVT. We apply Nesterov acceleration to speed up the process, which is proven to be convergent for \eqref{e:linl} with carefully chosen optimisation parameters \cite{Nesterov2003}. This iterative procedure for $L$ has to be embedded into the optimisation for \eqref{e:atmlagragian} and hence there are two loops in entire algorithm. The detailed optimisation algorithm for solving \eqref{e:atm} is listed in Alg.~\ref{alg:atm}. Note that the boxed conditions are satisfied by clamping in Alg. \ref{alg:atm}, which is the feasibility projection commonly used in many implementations \cite{LiuJiYe2009a}. We call the model in \eqref{e:atm} and its realisation in Alg. \ref{alg:atm} atmosphere cloud removal model and ATM for short.

\begin{algorithm}
	\caption{Solving atmosphere scattering model for cloud removal in \eqref{e:atm}}
	\begin{algorithmic}[1]
		\Require $D$, $\lambda$, $\epsilon$ 
		\State $Y=D/\lambda$, $L=0$, $C=0$
		\State $\rho=1.5$, $\mu=1.25/\|D\|$, $\mu_{\max}=\mu*10^7$
		\While{$\frac{\|D - C - (1-C)\circ L\|_F}{\|D\|_F}>\epsilon$ }
		\State update $C$ using \eqref{e:updateC}	
		\State $C(C<0) = 0$,  $C(C>1) = 1$ \Comment{Feasibility projection for $C$}
		\State $\theta=\underline\theta=1$, $\underline L=L$, $\ell_p = 1$
		\State $\underline f  =0$, $f=\infty$, $k_{\max}=100$
		\While{$k<k_{\max}$ and $|f-\underline f|>10^{-3}$}
		\State $W = L+(\theta/\underline\theta-\theta)(L-\underline L)$
		\State $\underline L=L$, $L=\mathcal S_{\frac1{\mu\ell_p}}(W)$
		\State $\underline\theta=\theta$, $\theta=\frac{\sqrt{\theta^4+4\theta^2}-\theta^2}2$
		\State $\underline f=f$, $f  =\frac\mu2\|D-C-(1-C)\circ L\|_F^2+ \|L\|_*$
		\EndWhile
		\State $L(L<0) = 0$,  $L(L>1) = 1$ \Comment{Feasibility projection for $L$}
		
		\State $Y=Y+	D - C - (1-C)\circ L$
		\State $\mu = \min(\mu\rho, \mu_{\max})$
		\EndWhile
	\end{algorithmic}
	\label{alg:atm}
\end{algorithm}

Due to the iterative procedure for solving $L$ in ATM, it is expected to be slow. However, the recovered cloud component, i.e. $C$ is closer to reality than that from RPCA as shown in Fig. \ref{fig:cloudsdetected}, where the source images are from GaoFen4 satellite captured at the same scene at 7 time points. 
\begin{figure*}
	\begin{center}
		\includegraphics[width=0.3\textwidth]{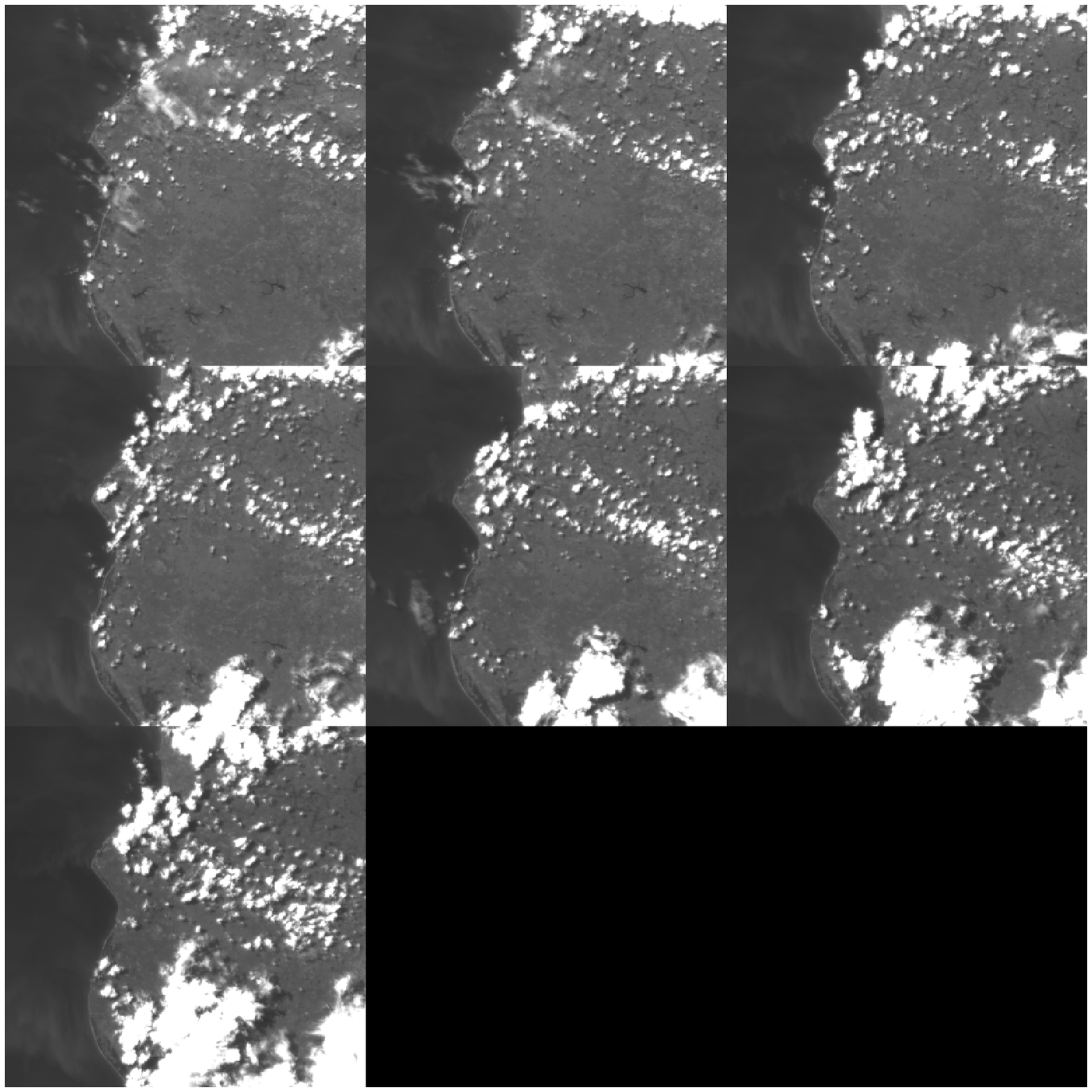}
		\includegraphics[width=0.3\textwidth]{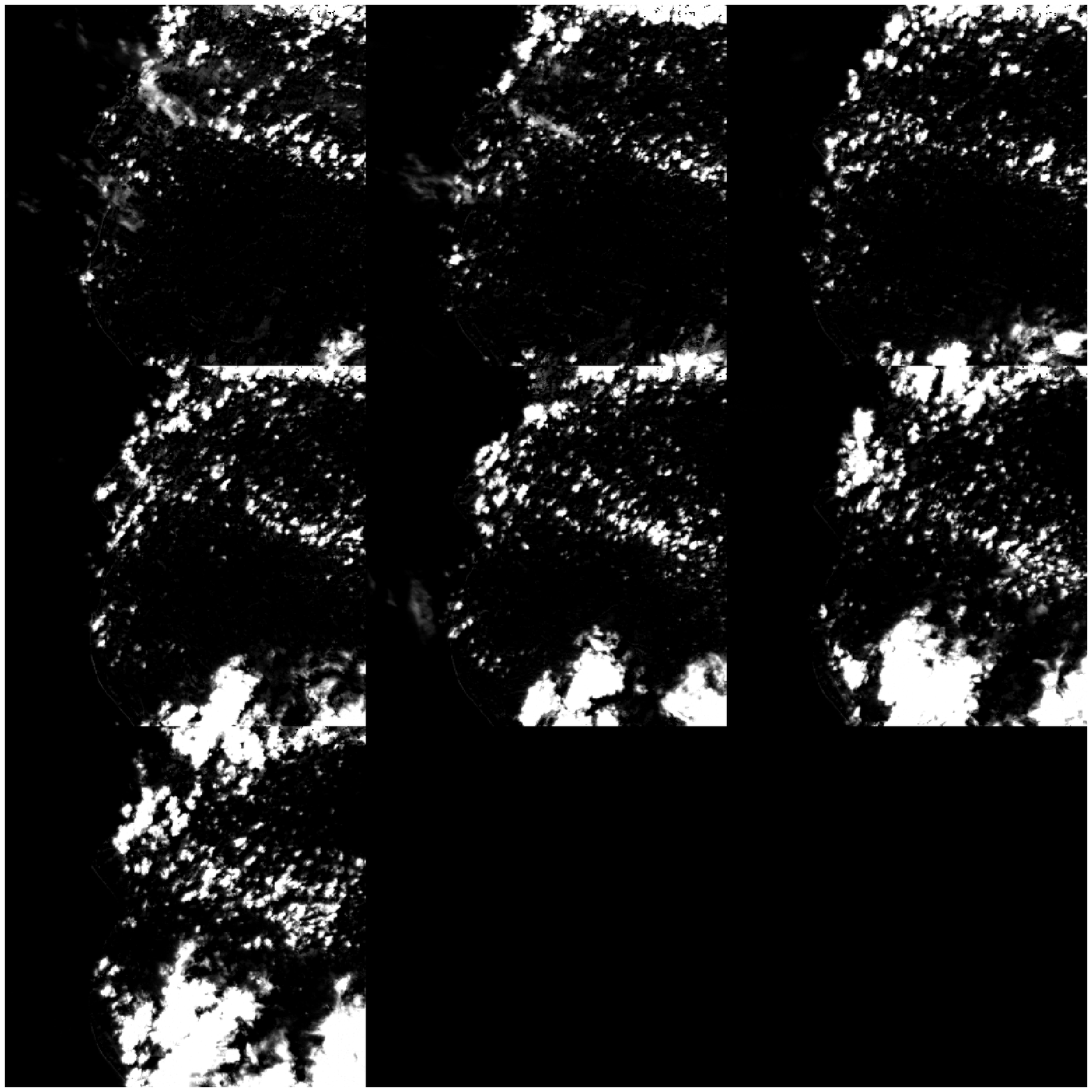}
		\includegraphics[width=0.3\textwidth]{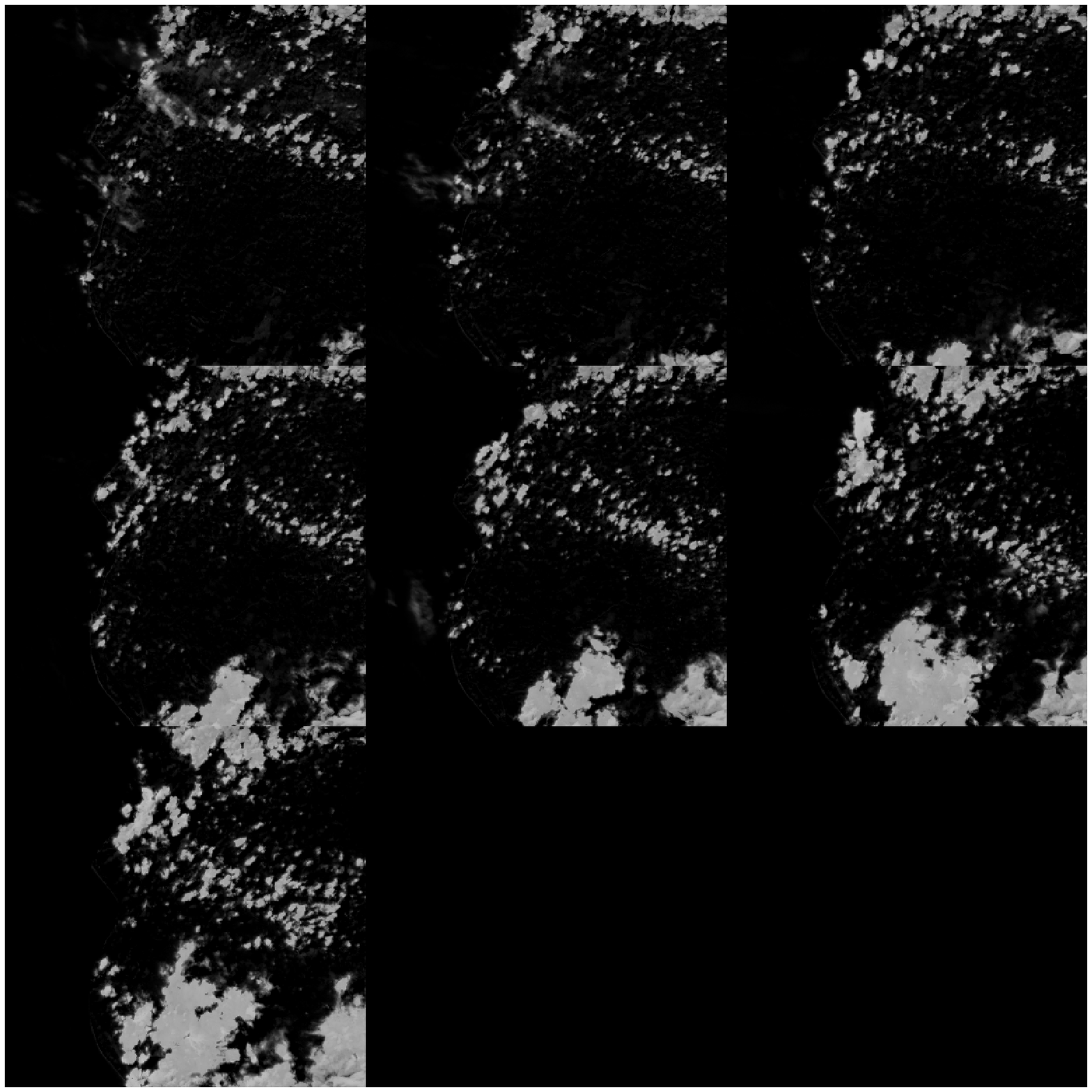}
		\caption{Cloud detection for GaoFen4 images of the same scene at 7 time points. From left: original GF4 images, ATM detected clouds, RPCA detected clouds.}
		\label{fig:cloudsdetected}
	\end{center}
\end{figure*}


It is clear that the ATM detected clouds are much brighter than those detected by RPCA thanks to its detailed atmosphere model, at the cost of much higher computational load as shown in Fig. \ref{fig:sim7time}. This motivates us to reduce its computational cost while maintaining model capacity. The key is to disentangle the interaction between $L$ and $C$ that breaks the convexity. Let us take a closer look the core in ATM model in \eqref{e:atm}, i.e. $D = L\circ (1-C) +C$. We decompose $L$ as $L = (1-P)\circ L + P\circ L$ for $P \in [0,1]$ and $P\succeq C$, where $\succeq$ means element-wise $\ge$, i.e. $[P]_i\ge [C]_i$. We proceed using this decomposition 
\[
D = L + C - L\circ C= (1-P)\circ L + C + (P-C)\circ L. 
\]
Under the choice of $P$, $N\equiv(P-C)\circ L\succeq 0$. The above can be written as 
\begin{equation}
D = \tilde L  + C + N
\end{equation}\label{e:equivatm}
for $\tilde L= (1-P)\circ L$.  We can easily write out an equivalent optimisation problem to \eqref{e:atm} using \eqref{e:equivatm} with many coupling conditions, which complicate the optimisation. However, if we drop some coupling conditions, i.e. relaxation and approximation, it will be much easier to solve, and yet the coupled problem is still a special case of the relaxed version. So we optimise the following
\begin{align}\label{e:atm2}
\min_{L,C,N} & \|L\|_* + \lambda \|C\|_1 + \beta\|N\|_F^2\\
\text{s.t. }&D = L + C + N \notag \\
&[L]_{ij}\in[0,1],\ [C]_{ij}\in[0,1],\ [N]_{ij}\in[0,1]\notag
\end{align}
Note that in above $L$ replace $\tilde L$ which is an approximation. We highlight this is a relaxed version of \eqref{e:atm} with its own interpretation, that is $N$ acts as a thin haze layer accounting for the atmosphere.

In \eqref{e:atm2} the values in $N$ are controlled by the Frobenius norm. It is well known that the Frobenius norm will not encourage sparsity, but compress the values towards zeros uniformly. Depending on the value of $\beta$, $C+N$ can reach the so-called $\alpha$-sparsity \cite{EldarKutyniok2012}, i.e. sparsity beyond value $\alpha$. Note that we fix $\beta=1$ throughout this paper. 

Eq. \eqref{e:atm2} is significantly easier to solve than eq. \eqref{e:atm} for being convex with no  interaction terms in the low rank component. Although direct generalisation of ADMM to more than two blocks of variables like those in \eqref{e:atm2} may not converge as shown in \cite{ChenHeYeYuan2014} with crafted counter examples, from many other applications, and vast amount of experiments we carried out, the optimisation converged quite quickly. Detailed optimisation algorithm is listed in Alg. \ref{alg:atm2}. We call the model in \eqref{e:atm2} and its optimisation algorithm in Alg. \ref{alg:atm2} alternative ATM, or aATM for short. 

\begin{algorithm}
	\caption{Solving \eqref{e:atm2}}
	\begin{algorithmic}
		\Require $D$, $\lambda$, $\beta$, $\epsilon$ 
		\State $Y=D/\lambda$, $L=0$, $C=0$, $N=0$
		\State $\rho=1.5$, $\mu=1.25/\|D\|$, $\mu_{\max}=\mu*10^7$
		\While{$\frac{\|D - C -L -N\|_F}{\|D\|_F}>\epsilon$ }
		\State $[C]_{ij} = sign([C]_{ij})\max(|[C]_{ij}|-\frac\lambda\mu,0)$	
		\State $C(C<0) = 0$,  $C(C>1) = 1$ \Comment{Feasibility projection for $C$}
		\State $L=\mathcal S_{\frac1{\mu}}(D-C-N+\frac{Y}\mu)$
		\State $L(L<0) = 0$,  $L(L>1) = 1$ \Comment{Feasibility projection for $L$}
		\State $N=\frac\mu{(\beta+\mu)(D-L-C+\frac{Y}\mu)}$
		\State $N(N<0) = 0$,  $N(N>1) = 1$ \Comment{Feasibility projection for $N$ but not necessary}
		\State $Y=Y+	D - C - L - N$
		\State $\mu = \min(\mu\rho, \mu_{\max})$
		\EndWhile
	\end{algorithmic}
	\label{alg:atm2}
\end{algorithm}

For the regularisation parameters, we provide theoretic analyse on the range of the main regularisation parameter $\lambda$ in Section \ref{sec:analysis}. The results align with the empirical study outcomes presented in the next section. Furthermore, we will present an empirical equation based on numerical method to determine the best value for $\lambda$ as a guidance for practical use.

We need to point out that all our models can be used directly to recover ground images unlike the main contenders \cite{wenTwoPassRobustComponent2018,zhangCoarsetoFineFrameworkCloud2019} where a matrix completion (MC) step has to follow although it is debatable whether MC is necessary. However, without some sort of ground truth, it would be a myth and the arguments would be meaningless. To address this long standing issue, we design a semi-realistic simulation of cloud covered images so that cloud and ground images are known. 

%
\section{Quantification of performance}\label{sec:exp}
\subsection{Simulation and performance indicator}
Cloud removal experiments are normally conducted on real images from satellites and the evaluation of the effectiveness of the recovery is based on visual checking and cloud cover by IoU (Intersection over Union) originated from computer vision \cite{LigginsChongKadarAlfordVannicolaThomopoulos1997} which is basically Jaccard index \cite{jaccardDISTRIBUTIONFLORAALPINE1912}.  The ground truth of cloud cover is obtained by time consuming manual labelling of clouds. Due to the complexity of the nature of clouds, it is extremely difficult to delineate the boundary of cloud clusters accurately, especially for thin clouds, and hence there exist large amount of errors when segmenting clouds manually. An ideal solution is to build cloud model to capture the shape and formation of all sorts of clouds, thick or thin. Unfortunately it is quite involved in physics and mathematics and it is a multi-facet problem \cite{dobashiUsingMetaballsModeling1999,dobashiVisualSimulationClouds2017, yuanModellingCumulusCloud2014,xingThreedimensionalParticleCloud2017}. Even if the cloud cover is known, the other side of the problem, way more important than cloud, is the ground truth of the ground scene. The ultimate goal of cloud removal is to recover ground scene accurately. Whereas current practice largely relies on subjective evaluation, or ``eye-balling'', which is apparently very vulnerable to bias. Therefore, an objective and robust evaluation is highly desirable. The work in \cite{zhengSingleImageCloud2021} used an overly simplified method to train the Unet for cloud separation by simulating random strips of white rectangles or from brightest to darkest colour gradient boxes on top of clear ground images. This is a bit primitive. Not only are they far from real clouds, but most importantly the regular shape reduces the complexity of the problem. Inspired by the success of applying Perlin noise \cite{perlinImageSynthesizer1985,perlinImprovingNoise2002} in the simulation of  virtual landscapes, we adopt Perlin noise to generate synthetic clouds. We take a cloud free image, say from the Inria aerial image labeling dataset \cite{maggiori2017dataset}, convert  it to greyscale as true ground image $I$ (pixels rescaled to $[0,1]$), and generate multiple 2D Perlin noise the same size as the image, as $C_i$, $i=1,\ldots,n$. Then the observed image $I_i$ is 
\[
I_i = C_i + (1-C_i)\circ I
\]
where pixels in $C_i$ are rescaled within $[0,1]$. Optionally one can apply any transformation  $f$ to $I$ before combining to clouds, e.g. geometric distortion to study some aspects of the methods; or generate a base $C$ and apply dynamics to $C$ for cloud time series mimicking clouds movement. We leave these for future work. By varying the parameters in Perlin noise generator, we can control the density of the generated clouds, lightly spread or heavily cover. We also apply some image correction,  e.g. Gamma correction and histogram equalisation, totally optional, to enhance the similarity to real clouds and haze. 

As the ground truth is readily accessible, we can apply any suitable quantitative evaluation to the cloud removal methods for detailed study. Given the main focus is the fidelity of the recovered image, we use the following to quantify the goodness of recovery 
\be\label{e:goodnessofrecovery}
r = \frac{\|\hat I_i - I\|_F}{\|I\|_F}
\ee
where $\hat I_i$ is the recovered image from any method. The quantity $r$ defined in \eqref{e:goodnessofrecovery} is the normalised distance metric, which is not meant to be the best. Other sophisticated measures could be applied certainly. However, \eqref{e:goodnessofrecovery} is sufficient by virtual of equivalency of norms \cite[Ch6.6]{azuhubiFunctionalAnalysis2010}, although in modelling process, different norms affect model behaviours vastly. 

\subsection{Performance evaluation on simulations on single image}
Thanks to the above semi-realistic simulation, we can now investigate another important aspect, that is the regularisation parameters used in the models. Using the goodness of recovery $r$, we can determine the best values from large scale randomised trials. Meanwhile we can also verify the necessity of the MC step. 

Let us first visually check the outcomes of different methods on one set of simulated images. The true image is from Inria dataset named {\it tyrol-w1} from Lienz in Austrian Tyrol resized to $1024\times 1024$ ($d=2^{20}$). It is a mixture of urbane and nature scene with some high intensity areas such as roads and roof tops shown in Fig. \ref{fig:simimages}. We simulate 7 thin cloud covers. One simulated image and the cloud layer are also shown in Fig. \ref{fig:simimages}. 
\begin{figure}[htbp]
	\begin{center}
		\includegraphics[width=0.8\linewidth]{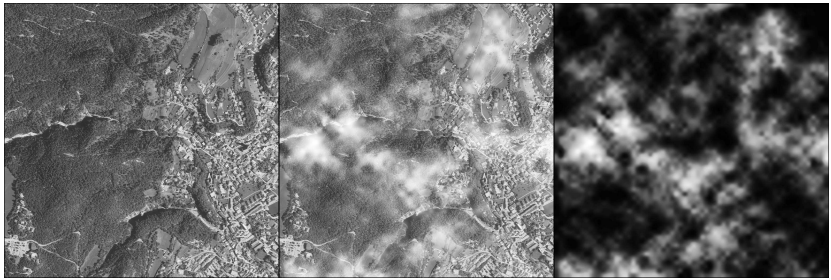}
		\caption{ Simulated image. Left to right: true clear image, one of the simulated image, its cloud cover.}
		\label{fig:simimages}
	\end{center}
\end{figure}

The clouds look very nature. Note that the cloud cover image appears to be sparse as large dark areas exist as shown in the histograms in Fig. \ref{fig:clouddetails} top panel where the right one is showing details in the range of $[0,0.2]$. However, they are not exactly zero and correspond to thin haze. If one thresholds them to zero, the cloud cover then becomes very artificial visually. The bottom panels in Fig. \ref{fig:clouddetails} show thresholding results, by 0.1 and 0.2 respectively from left to right. The visible boundaries of clouds are unpleasant and against the intuition due to the lack of the critical smoothness commonly present in natural images with clouds. This also shows the tremendous difficult to manually separate clouds in real images. 

\begin{figure}[htbp]
	\centering       
	\includegraphics[width=0.45\linewidth]{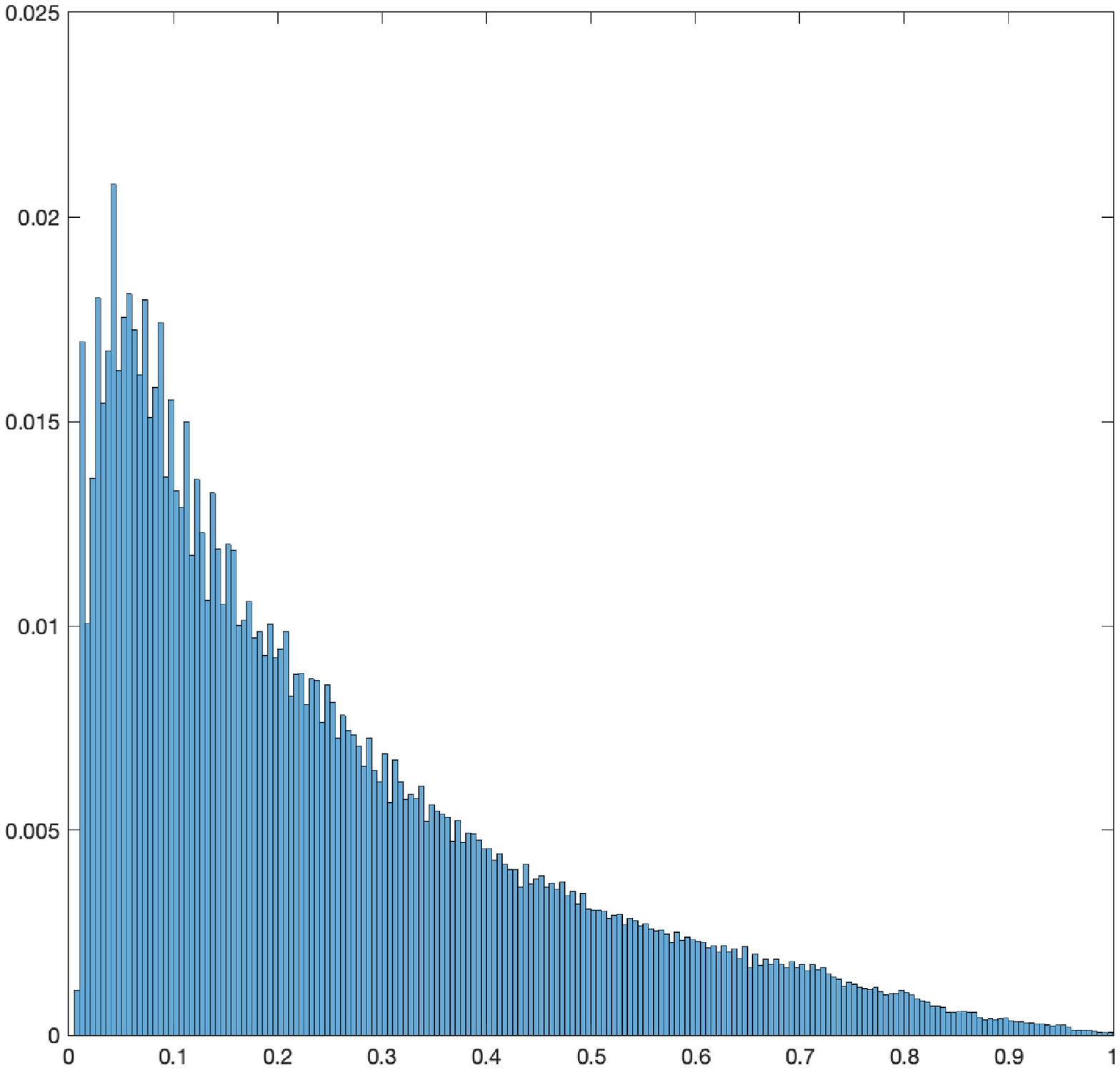}
	\includegraphics[width=0.45\linewidth]{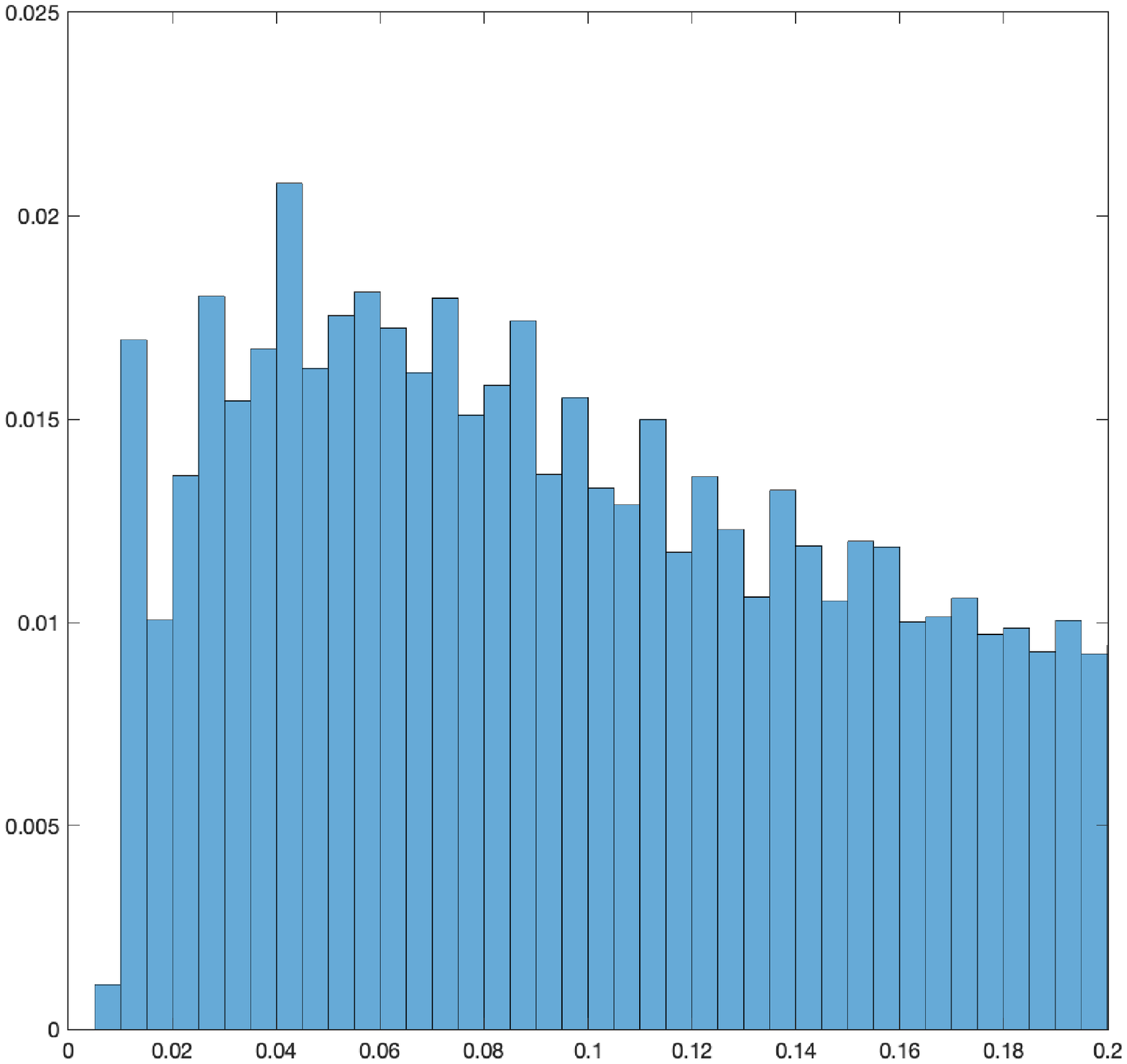}\\\ 
	\includegraphics[width=0.45\linewidth]{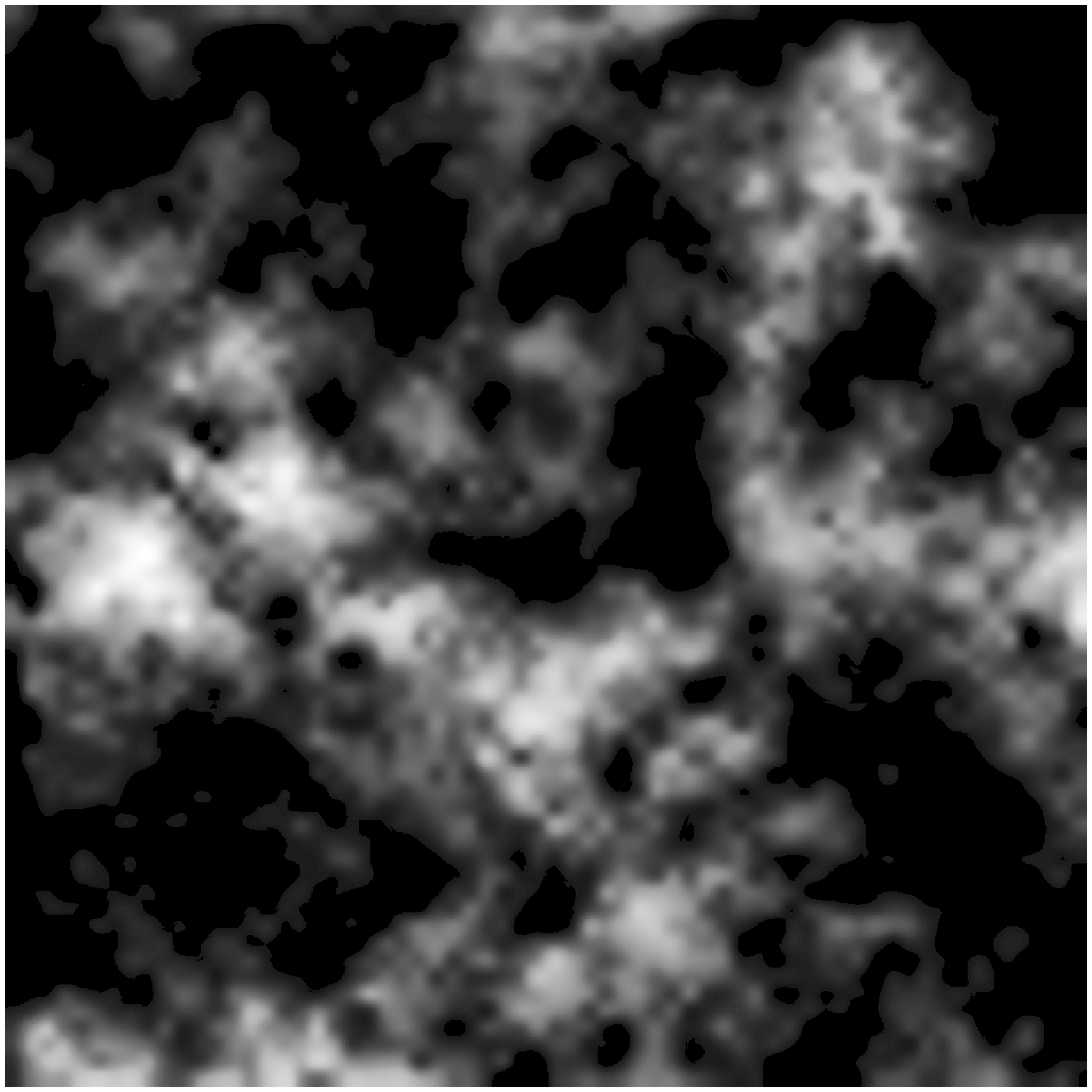}
	\includegraphics[width=0.45\linewidth]{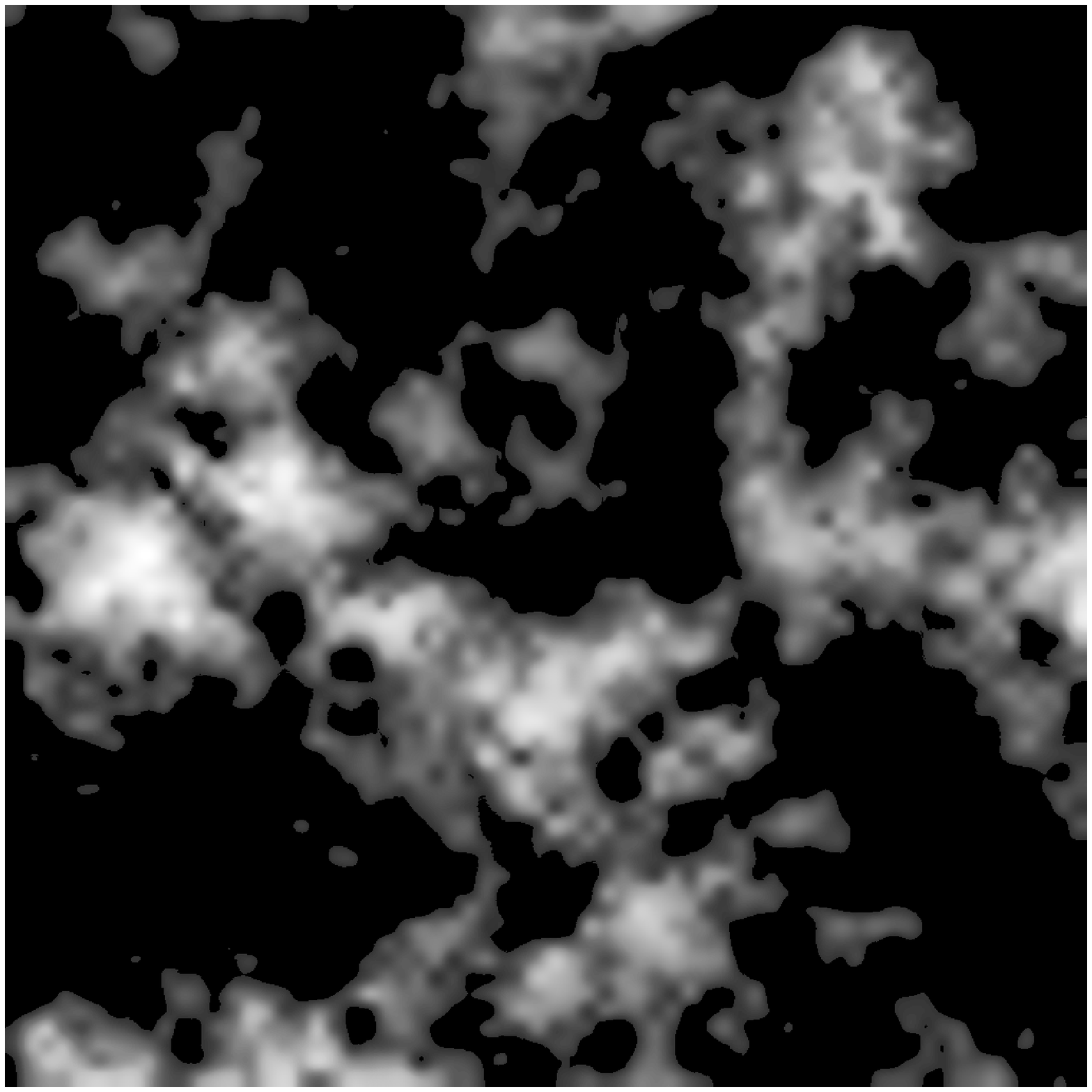}
	\caption{Details of a simulated cloud cover.}
	\label{fig:clouddetails}
\end{figure}

Fig. \ref{fig:recoveredimages} shows the recovered images by different methods obtained with the setting of the regularisation parameters as $\lambda=\frac1{\sqrt{d}}=\frac1{1024}$ and $\beta=1$ in aATM. Simple visual checking tells us that aATM and RPCA are better than ATM as ATM results (with and without MC)  contain fair amount of cloud pixels. This may be straightforward. However, it is not clear which one is the best. It appears that aATM is slightly better for less ``washed away'' areas. It is also impossible to identify the effect of MC. These indicate the limit of visual examination. Nonetheless, the $r$ values of these methods are 0.1758 ,0.3195, 0.1754 in the order of aATM, ATM, RPCA with MC, and {\bf 0.1678}, 0.3373, 0.1681 without MC. Now it is clear that aATM without MC is the best and {\it MC does not do anything useful to enhance the results}. 

\begin{figure}[htbp]
	\begin{center}
		\includegraphics[width=0.99\linewidth]{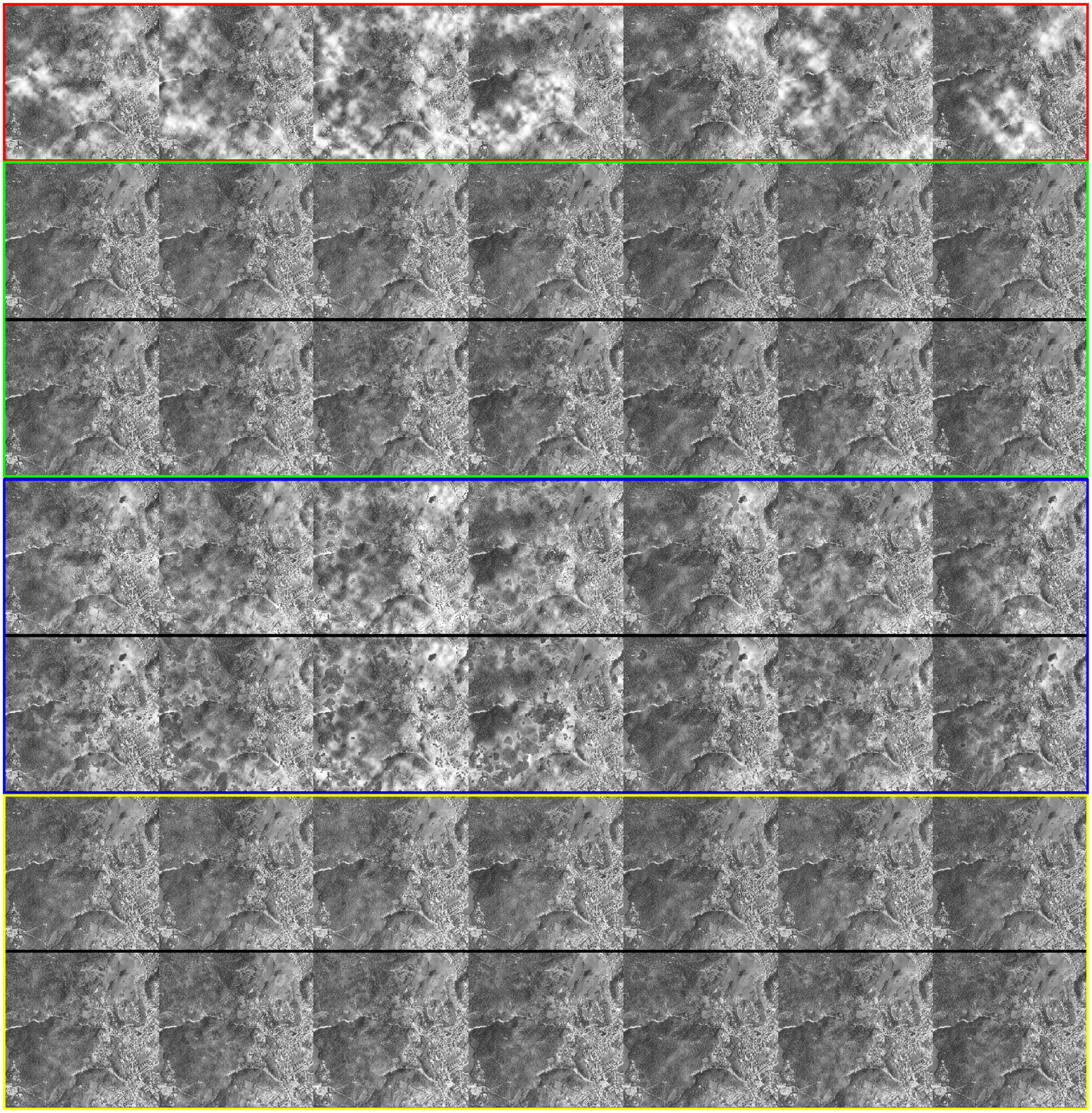}
		\caption{ Recovered images. top row: simulated images with cloud cover; 2nd-3rd row (in green box): aATM results with/without MC; 4th-5th row (in blue box): ATM results with/without MC; 6th-7th row (in yellow box): RPCA results with/without MC.}
		\label{fig:recoveredimages}
	\end{center}
\end{figure}

\begin{figure}[htbp]
	\begin{center}
		\includegraphics[width=0.99\linewidth]{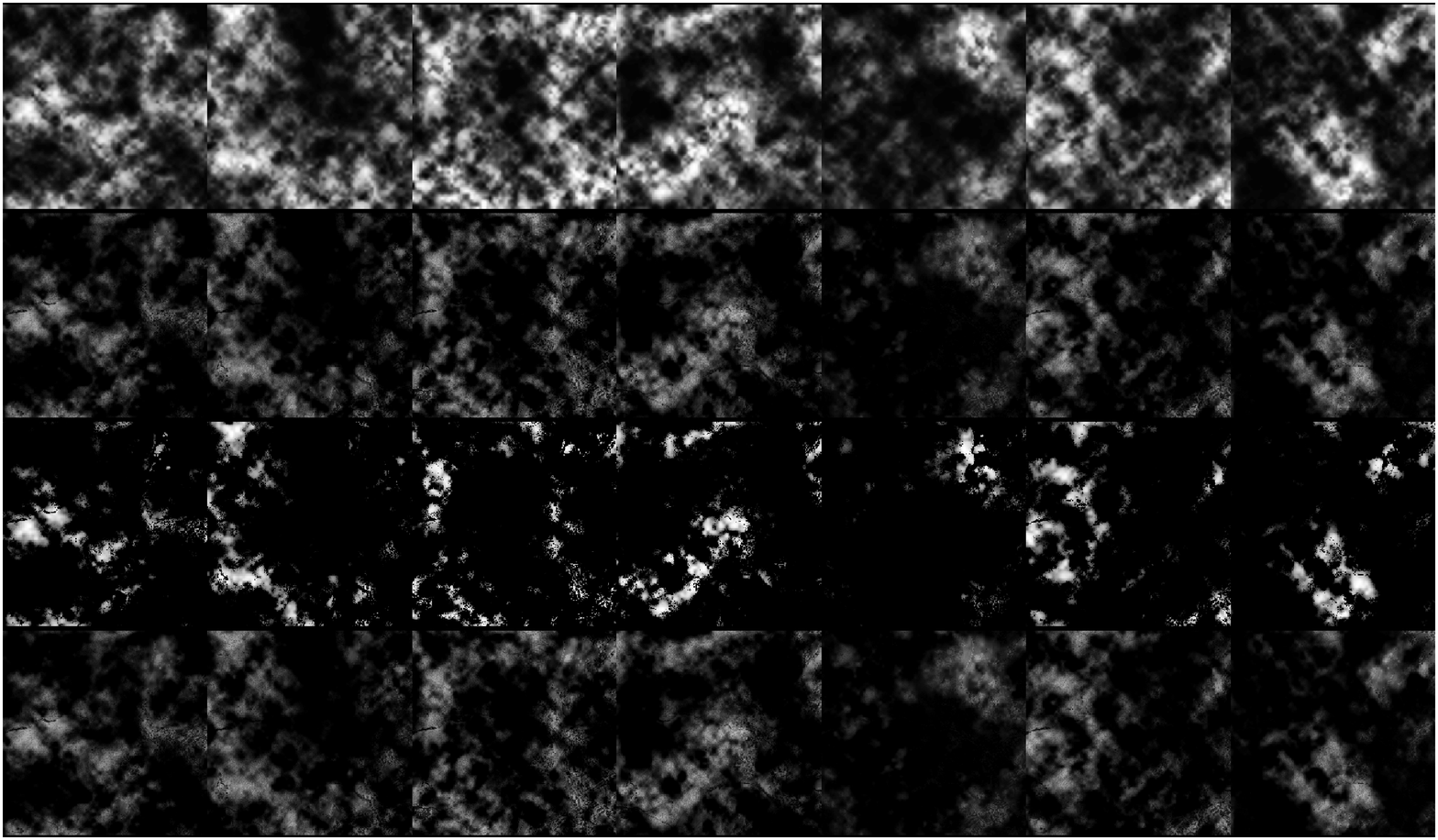}
		\caption{Cloud covers estimated by various methods. From top: known cloud covers, aATM results, ATM results, RPCA results.}
		\label{fig:cloudcover}
	\end{center}
\end{figure}

Fig. \ref{fig:cloudcover} shows the cloud covers detected by these methods. The clouds separated by ATM are in better contrast, i.e. very bright and very dark although it appears very conservative, that is visually sparser than others.  In contrast, aATM and RPCA seem to have more cloud pixels identified. Again, it is impossible to tell the difference between aATM clouds and RPCA clouds by visual examination. Note that for aATM the cloud is the summation of $C$ and $N$. 

One major benefit of simulation is to validate the sensitivity of the regularisation parameter, mainly $\lambda$ in the models. We ran large scale simulation with $n=7$ and 15, using the same true image. We tested 51 values of $\lambda$ equally spaced in log scale with the recommend value, $\frac1{\sqrt{d}}$, in the middle, i.e. from 9.7656e-05 to 0.0098, and for each value of $\lambda$, we ran 50 randomised trials. The results are collected in Fig. \ref{fig:sim7} and \ref{fig:sim15}, where each data point is the mean and $\pm1$ standard deviation of the $r$ values across all trials for a given $\lambda$ value. 

Many things can be read out from the plots. The first is that ATM is not as good as competitors for small $n$, e.g. $n=7$, regardless the choice of the $\lambda$ values. 
However, it begins to gain advantage when $n$ is larger. This will be investigated later. The second is that $\lambda$ has roughly 3 zones: 1) failure zone, where the sparsity is too weak and all methods fail with no recovered images; 2) clamping zone, where the sparsity is overwhelming such that sparse component is wiped out and all methods lose the capacity to identify clouds; 3) Goldilock zone, where the algorithms work reasonably well ($r\le0.2$ for $\lambda\in[0.0007,0.0012]$), including their bests. Of course, these zones have different boundaries for different methods, and their $r$ values inside these zones have different shapes. For example, RPCA seems to have rather flat $r$ values in its Goldilock zone meaning that its performance varies just a little bit if $\lambda$ is from that zone. There exist a value for $\lambda$ which is better than the default recommended value. This holds for all methods, interestingly with different margin of being true. For example, for RPCA, the margin is smaller, that is the optimal value of $\lambda$ brings 17.23\% reduction of $r$ value on average in $n=7$ case, while that is 42.11\% for aATM. Similar observation for $n=15$. When all methods take the default value of $\lambda$, aATM without MC works the best on average, which is 22.84\% better than RPCA in expectation sense. The overall best performance of aATM against that of RPCA is 43.06\% reduction in $r$ value, down from 0.1625 to 0.0941, which is very significant. This is verified by a one-side t-test with null hypothesis of no $r$ values reduction performed on the trials with the optimal and default $\lambda$ values where significance level $\alpha=10^{-5}$. The resulting p-value for null hypothesis is extremely low $1.7557\times 10^{-29}$ strongly supporting the alternative hypothesis that the reduction is quite significant. A very interesting observation is that ATM without MC comes to the second when $n=15$ in terms of the overall best performance, better than RPCA. Fig. \ref{fig:ATMvsRPCAsim15} reveals the details of the $r$ values of both methods in the trials when holding $\lambda$ value constant, $\lambda=4.6741$e$-04$, the optimal value for both methods. The $r$ values of each method vary during the trials due to the randomness of the simulation. RPCA has higher values of $r$ almost constantly with greater variation than ATM. There is no doubt that ATM outperforms RPCA when $\lambda$ is optimal. 
The third is that MC does not bring much improvement even acts adversely when $\lambda$ is in the Goldilock zone. This claim is strongly supported by statistical evidence. Table \ref{table:testMC} shows the one-side t-tests results performed on the trials of various methods with optimal $\lambda$ values for both $n=7$ and $n=15$ cases. The null hypothesis is that MC brings $r$ value reduction on average, i.e. the mean of $r_{MC}-r_{\overset{\sim}{MC}}$ is no greater than 0. $r_{MC}$ and $r_{\overset{\sim}{MC}}$ are the $r$ values of a method with and without MC respectively. The significance level $\alpha$ is set as low as $10^{-5}$.  The p-values are extremely low suggesting that the null hypothesis should be rejected almost surely. The only exception is ATM when $n=7$, which favours the MC to further improve its performance.  
So clearly the recommendation is to omit MC step in cloud removal in these methods, which is extra computation with little benefit. However, we need to point out here though that there are regularisation parameters as well in MC, for which we took the default/recommended values, see previous sections for detail. 

\begin{table}                                     
	\centering                                        
	\begin{tabular}{c|cp{2.5cm}|cp{2.5cm}}   
		\hline
		\multirow{2}{*}{Methods} & \multicolumn{2}{c|}{$n=7$} &  \multicolumn{2}{c}{$n=15$} \\
		\cline{2-5}
		& p-value & Confidence interval & p-value & Confidence interval\\  \hline           
		aATM  & 1.4973e-58 &  $[0.0430, \infty)$  & 1.0251e-74 & $[0.0629, \infty)$ \\
		ATM & 1.0000  & $[-0.0234,  \infty)$   & 5.8915e-31& $[0.0155, \infty)$ \\
		RPCA & 1.7878e-17 & $[0.0039, \infty)$  & 7.7511e-44 & $[0.0126, \infty)$ \\ \hline
	\end{tabular}                                     
	\caption{Null hypothesis $H_0$ is $\overline{r_{MC}-r_{\overset{\sim}{MC}}} \le 0$. Significance level $\alpha=10^{-5}$ in t-tests.}                               
	\label{table:testMC}                        
\end{table} 

\begin{figure}[htbp]
	\begin{center}
		\includegraphics[width=0.45\linewidth]{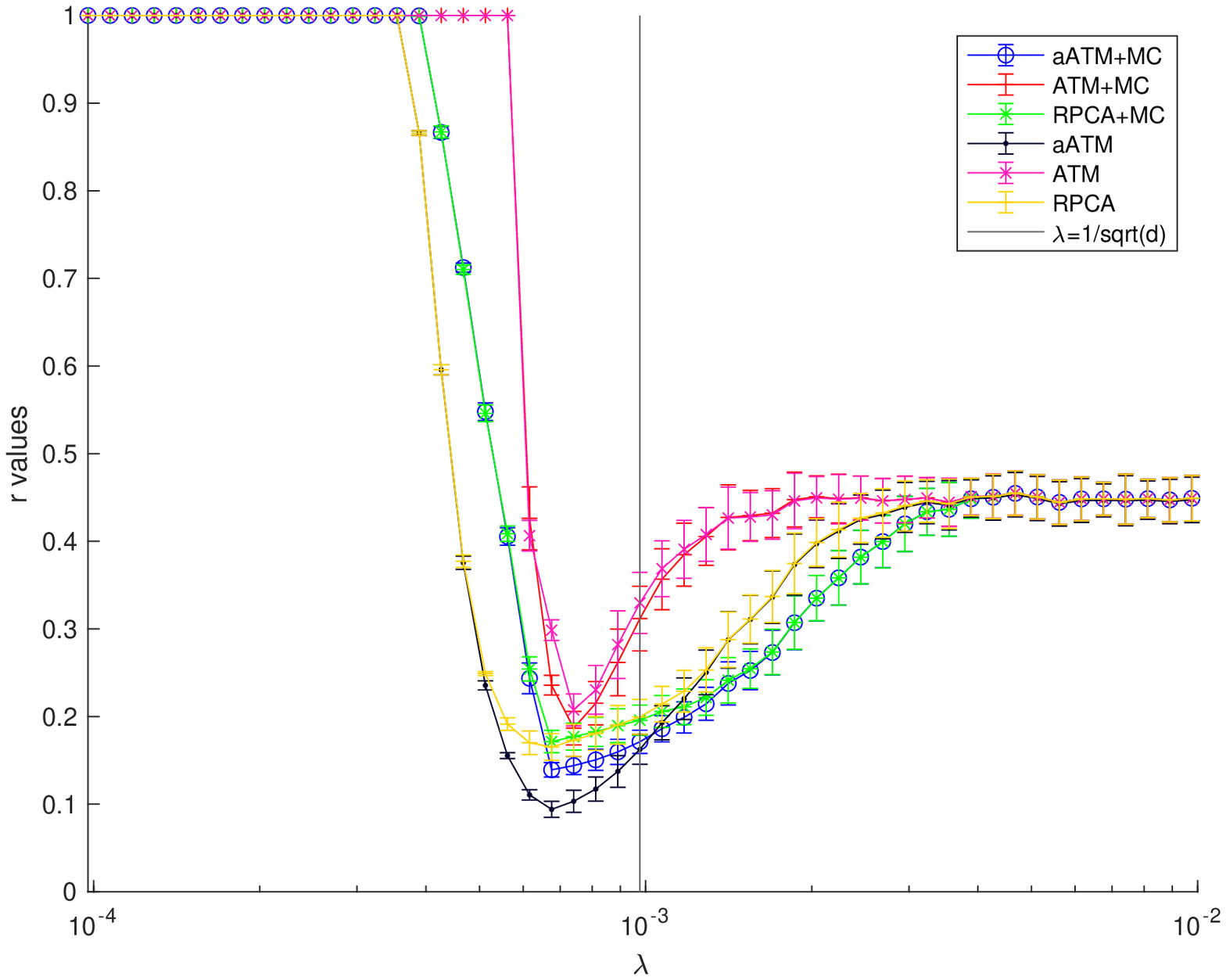}
		\includegraphics[width=0.45\linewidth]{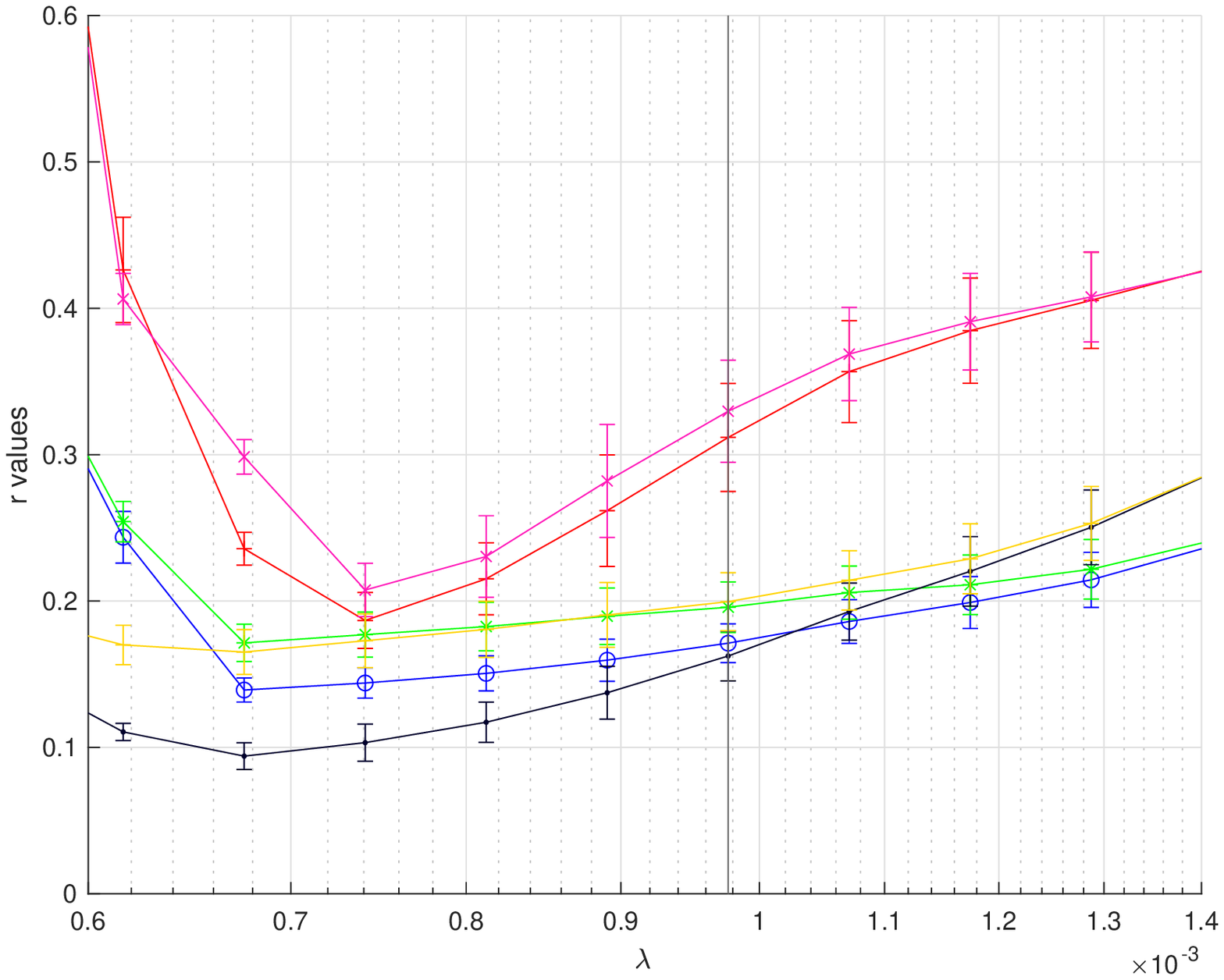}
		\caption{Randomised trials ($n=7$). $\lambda$ in log scale. }
		\label{fig:sim7}
	\end{center}
\end{figure}

\begin{figure}[htbp]
	\begin{center}
		\includegraphics[width=0.45\linewidth]{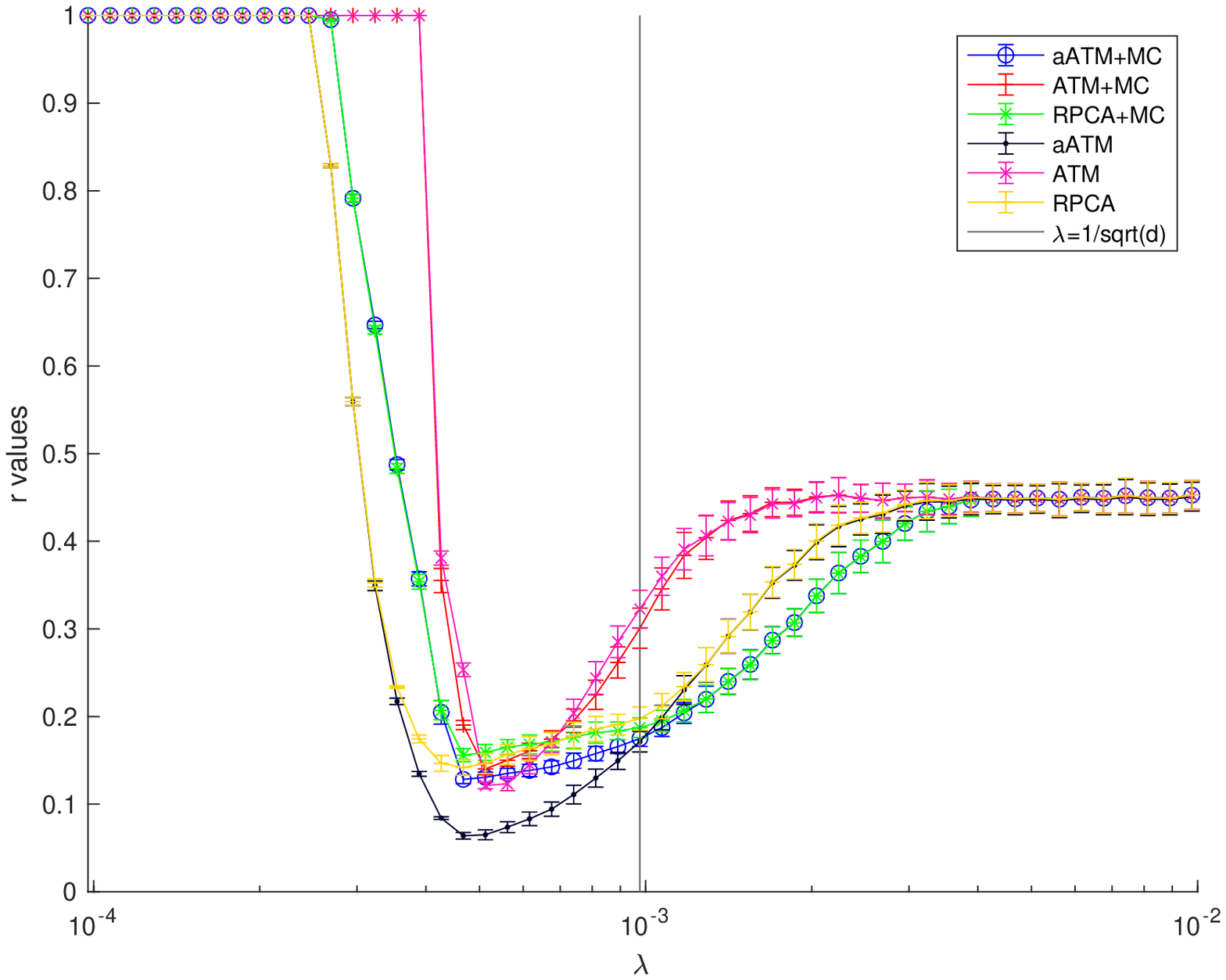}
		\includegraphics[width=0.45\linewidth]{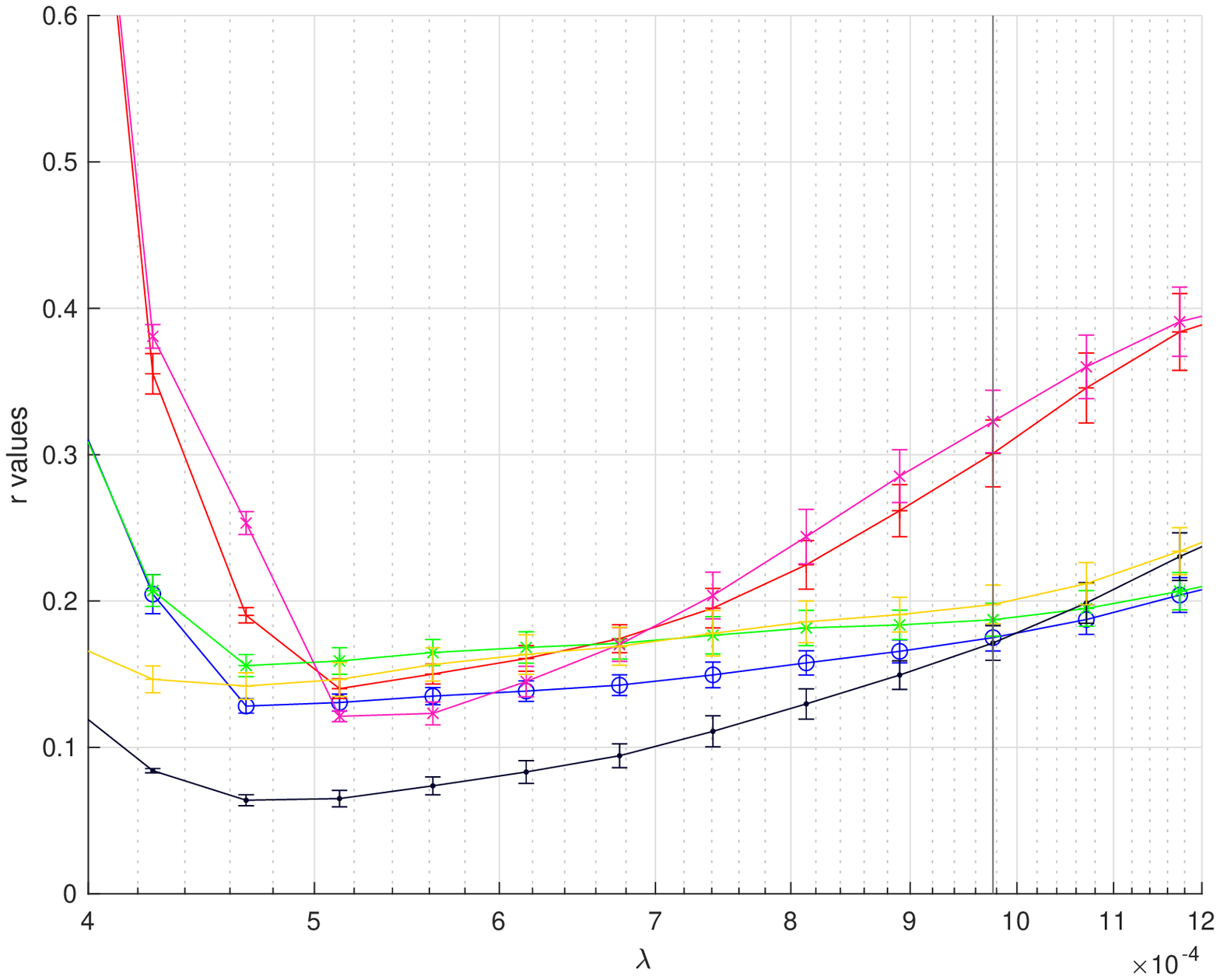}
		\caption{Randomised trials ($n=15$). $\lambda$ in log scale. }
		\label{fig:sim15}
	\end{center}
\end{figure}

\begin{figure}[htbp]
	\begin{center}
		\includegraphics[width=0.45\linewidth]{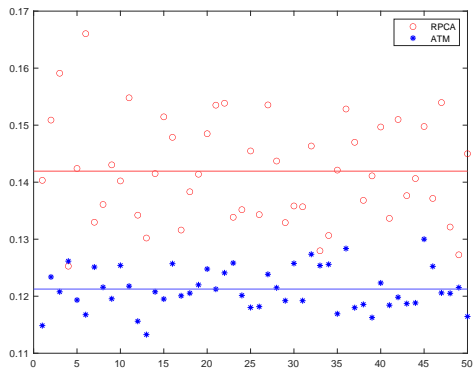}
		\caption{$r$ values of ATM and RPCA in randomised trials ($n=15$) with their best  $\lambda$ values ($\lambda=4.6741$e$-04$ for both). X-axis is the repeat indices from 1 to 50. The coloured horizontal lines are the means of corresponding $r$ values.}
		\label{fig:ATMvsRPCAsim15}
	\end{center}
\end{figure}

To see the comparison more clearly, we present the mean and standard deviation values of the results in some $\lambda$ range (in the Goldilock zone) into Table \ref{table:mean} and \ref{table:std} for clarity. The column in the middle of the tables with double column indicates the values when $\lambda$ equal to the default value. The column-wise best (minimum among all methods) is highlighted by italic font and overall best is highlighted by bold font. They show clearly that aATM is the best in terms of both expected $r$ value and stability reflected by  smaller standard deviations. 

\begin{table*}                                                                                                  
	\centering                                                                                \resizebox{\textwidth}{!}{                    
	\begin{tabular}{c|ccccc||c||ccc}      
		\hline                                                                              
		$\lambda\ (\times 10^{-3})$ & 0.6162 & 0.6756 & 0.7408 & 0.8123 & 0.8906 & 0.9766 & 1.0708 & 1.1741 & 1.2874 \\\hline
		aATM+MC & 0.2436 & 0.1392 & 0.1440 & 0.1506 & 0.1596 & 0.1712 & {\it 0.1860} & {\it 0.1990} & {\it 0.2145} \\                    
		ATM+MC & 0.4262 & 0.2358 & 0.1867 & 0.2152 & 0.2618 & 0.3118 & 0.3567 & 0.3847 & 0.4054 \\                     
		RPCA+MC & 0.2543 & 0.1714 & 0.1771 & 0.1826 & 0.1896 & 0.1958 & 0.2058 & 0.2111 & 0.2217 \\                    
		aATM  & {\it 0.1106} & {\bf 0.0941} & {\it 0.1033} & {\it 0.1172} & {\it 0.1374} & {\it 0.1625} & 0.1928 & 0.2203 & 0.2504 \\                      
		ATM & 0.4063 & 0.2985 & 0.2075 & 0.2304 & 0.2820 & 0.3297 & 0.3687 & 0.3908 & 0.4078 \\                        
		RPCA & 0.1700 & 0.1652 & 0.1729 & 0.1806 & 0.1905 & 0.1996 & 0.2142 & 0.2288 & 0.2531 \\                       
		\hline
	\end{tabular}}                                                                                                
	\caption{Mean $r$ values of various methods from all trials for some $\lambda$ values. }                                                                                                   
	\label{table:mean}                                                                                     
\end{table*}

\begin{table*}                                                                                                  
	\centering                                                                                \resizebox{\textwidth}{!}{                   
	\begin{tabular}{c|ccccc||c||ccc}   
		\hline                                                                                 
		$\lambda\ (\times 10^{-3})$ & 0.6162 & 0.6756 & 0.7408 & 0.8123 & 0.8906 & 0.9766 & 1.0708 & 1.1741 & 1.2874 \\\hline
		aATM+MC & 0.0176 & {\it 0.0082} & {\it 0.0104} & {\it 0.0120} & {\it 0.0144} & 0.0132 & {\it 0.0149} & {\it 0.0177} & {\it 0.0188} \\                    
		ATM+MC & 0.0360 & 0.0112 & 0.0191 & 0.0246 & 0.0381 & 0.0369 & 0.0348 & 0.0360 & 0.0329 \\                     
		RPCA+MC & 0.0138 & 0.0127 & 0.0154 & 0.0166 & 0.0193 & 0.0173 & 0.0182 & 0.0204 & 0.0203 \\                    
		aATM  & {\bf 0.0059}& 0.0091 & 0.0126 & 0.0138 & 0.0181 & 0.0171 & 0.0195 & 0.0237 & 0.0255 \\                      
		ATM & 0.0175 & 0.0118 & 0.0182 & 0.0278 & 0.0386 & 0.0348 & 0.0318 & 0.0330 & 0.0307 \\                        
		RPCA & 0.0134 & 0.0153 & 0.0183 & 0.0192 & 0.0221 & 0.0198 & 0.0203 & 0.0239 & 0.0253 \\  \hline                     
	\end{tabular}}                                                                                                  
	\caption{Standard deviations of $r$ values of various methods from all trials for some $\lambda$ values.}                                                                                                   
	\label{table:std}                                                                                     
\end{table*}

\subsection{Computation costs comparison on simulations using single fixed image}
We report the time for computation. Fig. \ref{fig:sim7time} shows the time consumed by various methods, with and without MC. Similar to previous plots, the data points in the plot are the means and $\pm1$ standard deviation of the times (in seconds) across all trials for a given $\lambda$ value. Apparently they vary across simulations.

Quite obviously here MC is extra work. Given no extra benefit, the computation for MC should be saved. ATM is pretty difficult to solve indeed, reflected by the skyrocketed computational time compared with those from others. There are double optimisation loops inside its solver. Interestingly, when $\lambda$ is correct, ATM takes the most of time to compute {\it on average}. When $\lambda$ is growing from the failure zone to the Goldilock zone, a huge jump of needed computation can be observed, which is statistically significant. As ATM's performance turns very sharply along $\lambda$ values, its computation cost varies accordingly, peaking at where ATM works the best and jumping down quickly. This is a very interesting observation that may lead to a way of selection of its regularisation parameter as well as a hypothesis of required computational cost vs $\lambda$ value. Along with the well known regularisation path in sparse models \cite{HastieRossetTibshiraniZhu2004,FriedmanHastieTibshirani2010a,TibshiraniTaylor2011}, this may be a useful route leading to optimal regularisation selection in future. This is never possible previously without simulation. In general, aATM is more expensive to compute than RPCA because of the extra block of variables $N$, doubling the cost almost for all $\lambda$ values. However, the base is quite small. when $\lambda$ is in the Goldilock zone, aATM is doubling RPCA from about 6 seconds to 10 seconds. Therefore it is not dramatic. 

\begin{figure}[htbp]
	\begin{center}
		\includegraphics[width=0.45\linewidth]{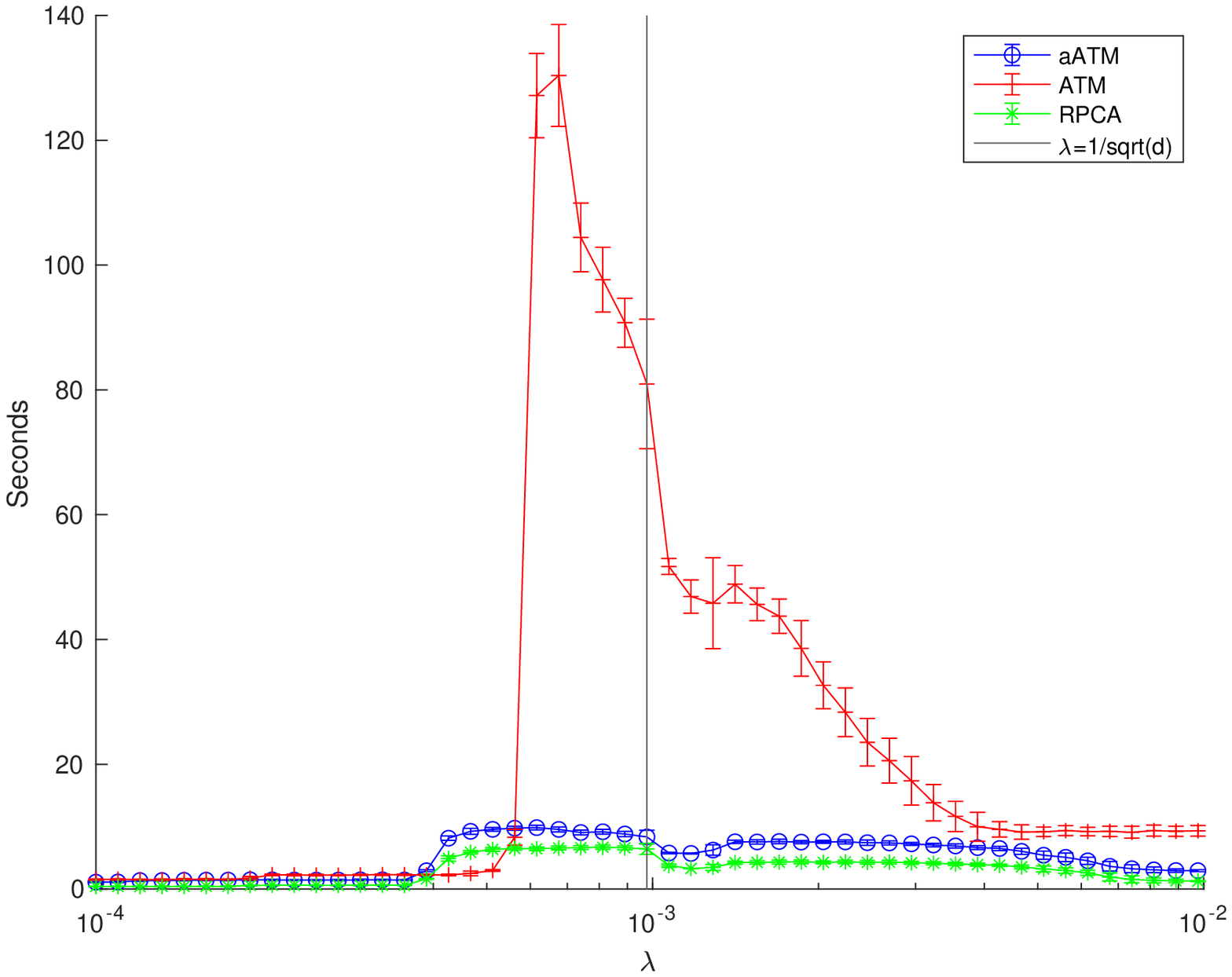}
		\includegraphics[width=0.45\linewidth]{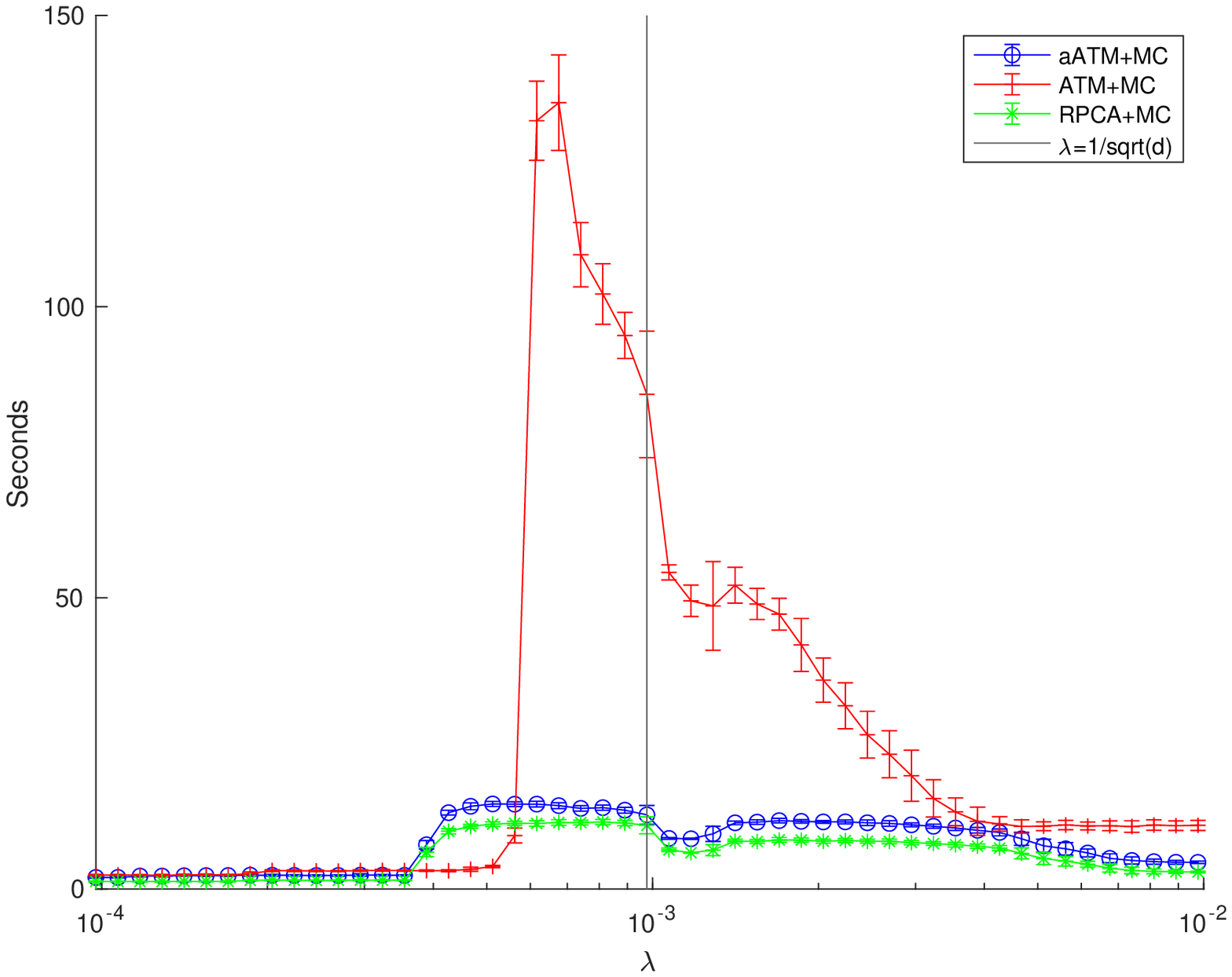}
		\caption{Time consumed in randomised trials ($n=7$). $\lambda$ in log scale. }
		\label{fig:sim7time}
	\end{center}
\end{figure}

Again, we present the mean and standard deviation values of the results in some $\lambda$ range (in the Goldilock zone) into Table \ref{table:meantime} and \ref{table:stdtime} for clarity. The column in the middle of the tables with double column indicates the values when $\lambda$ equal to the default value. The column-wise best (minimum time among all methods) is highlighted by italic font and overall best is highlighted by bold font. They show clearly that RPCA is the fastest and aATM is about 50\% more expensive to run at this range of $\lambda$ values. Considering its superiority in recovery performance, this cost is absolutely worthwhile.  

\begin{table*}                                                                                                  
	\centering                                                                                \resizebox{\textwidth}{!}{                     
	\begin{tabular}{c|ccccc||c||ccc}           
		\hline                                                                       
		$\lambda\ (\times 10^{-3})$ & 0.6162 & 0.6756 & 0.7408 & 0.8123 & 0.8906 & 0.9766 & 1.0708 & 1.1741 & 1.2874 \\\hline
		aATM  & 9.84 & 9.56 & 9.06 & 9.18 & 8.82 & 8.35 & 5.77 & 5.71 & 6.25 \\                      
		ATM & 127.17 & 130.38 & 104.43 & 97.66 & 90.77 & 80.95 & 51.69 & 46.87 & 45.81 \\            
		RPCA & {\it 6.45} & {\it 6.58} & {\it 6.64} & {\it 6.70} & {\it 6.61} & {\it 6.42} & {\it 3.75} & {\bf 3.30} & {\it 3.47} \\                       
		aATM+MC & 14.59 & 14.32 & 13.85 & 13.97 & 13.54 & 12.73 & 8.69 & 8.63 & 9.49 \\              
		ATM+MC & 131.92 & 135.02 & 108.91 & 102.17 & 95.06 & 84.95 & 54.37 & 49.49 & 48.61 \\        
		RPCA+MC & 11.26 & 11.43 & 11.42 & 11.51 & 11.34 & 10.94 & 6.72 & 6.24 & 6.69 \\        
		\hline        
	\end{tabular}}                                                                                                  
	\caption{Means of time consumed by various methods from all trials for some $\lambda$ values.}                                                                                                   
	\label{table:meantime}                                                                                     
\end{table*}  

\begin{table*}                                                                                                  
	\centering                                                                                \resizebox{\textwidth}{!}{                      
	\begin{tabular}{c|ccccc||c||ccc}           
		\hline                                                                       
		$\lambda\ (\times 10^{-3})$ & 0.6162 & 0.6756 & 0.7408 & 0.8123 & 0.8906 & 0.9766 & 1.0708 & 1.1741 & 1.2874 \\\hline
		aATM  & 0.36 & 0.40 & 0.39 & 0.38 & 0.42 & 1.07 & 0.13 & 0.07 & 0.84 \\                      
		ATM & 6.75 & 8.16 & 5.50 & 5.19 & 3.93 & 10.38 & 1.28 & 2.66 & 7.29 \\                       
		RPCA & {\it 0.30} & {\it 0.32} & {\it 0.31} & {\it 0.33} & {\it 0.31} & {\it 0.87} & 0.28 & \bf{0.04} & {\it 0.48} \\                       
		aATM+MC & 0.48 & 0.51 & 0.47 & 0.44 & 0.50 & 1.62 & 0.13 & 0.11 & 1.31 \\                    
		ATM+MC & 6.80 & 8.20 & 5.52 & 5.20 & 3.97 & 10.86 & 1.29 & 2.70 & 7.62 \\                    
		RPCA+MC & 0.48 & 0.48 & 0.41 & 0.46 & 0.46 & 1.45 & 0.30 & 0.08 & 0.94 \\      
		\hline              
	\end{tabular}}                                                                                
	\caption{Standard deviations of time consumed by various methods from all trials for some $\lambda$ values.}                                                                                 
	\label{table:stdtime}                                                                   
\end{table*}

Both aATM and RPCA exhibit the same pattern observed from ATM but less pronounced. When $\lambda$ goes form failure zone to Goldilock zone, there is time cost leap and stabilises for a while and then some up and downs. Again, the zone changing pattern of time cost is a good indicator of entering the Goldilock zone from failure zone. It is possible to exploit it for finding a better $\lambda$ value than the default one, although it is tricker than ATM where the pattern is very clear. 

\subsection{Determining the best $\lambda$ value}\label{sec:bestlamba}
What is the best value for the regularisation parameter $\lambda$? This is an inevitable and yet critical question in practice. It is almost impossible to address it without many assumptions and lengthy theoretic analysis. Please refer to Section \ref{sec:analysis} for Goldilock zone bounds for the complexity. However, thanks to simulation, we can fit the data to derive some equation for the best $\lambda$ value. Different from drilling into the computational cost pattern suggested by previous section, we look at the best $\lambda$ values of different methods by stretching $n$ from 2 to 250. The ``best'' is defined as the $\lambda$ value corresponding to the minimum average $r$ value across trials, which we denote as $\lambda^*$. Fig. \ref{fig:bestlambdavsn} shows the ratio of $\lambda^*$ of all methods to the suggested default value $\frac1{\sqrt d}$, i.e. $\sqrt d\lambda^*$. As $n$ becomes larger, $\sqrt d\lambda^*$ decreases exponentially. We turn this into almost linear by applying $\log$ twice to $n$, as shown in Fig. \ref{fig:bestlambdavsn} right panel. From this data, we fit a linear model and derive the following $\lambda^*$ estimator 
\begin{equation}\label{e:bestlambda}
\hat\lambda^* = \frac{-0.5682\log(\log(n))+1.0747}{\sqrt d}, \forall n\ge 2
\end{equation}
The red curves in Fig. \ref{fig:bestlambdavsn} are the values of $\sqrt d\hat\lambda^*$ at different scales of $n$. 

\begin{figure}[htbp]
	\begin{center}
		\includegraphics[width=0.45\linewidth]{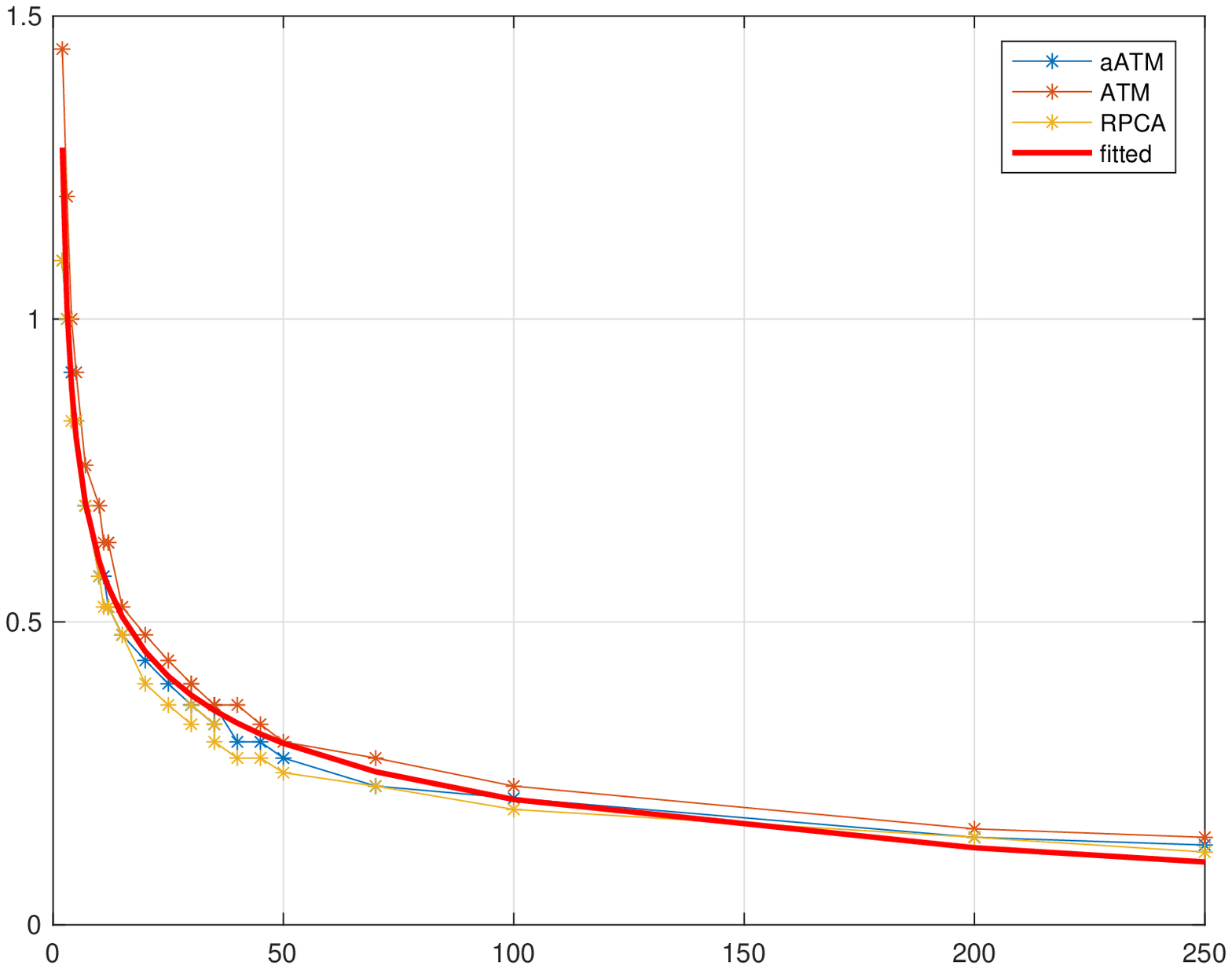}
		\includegraphics[width=0.45\linewidth]{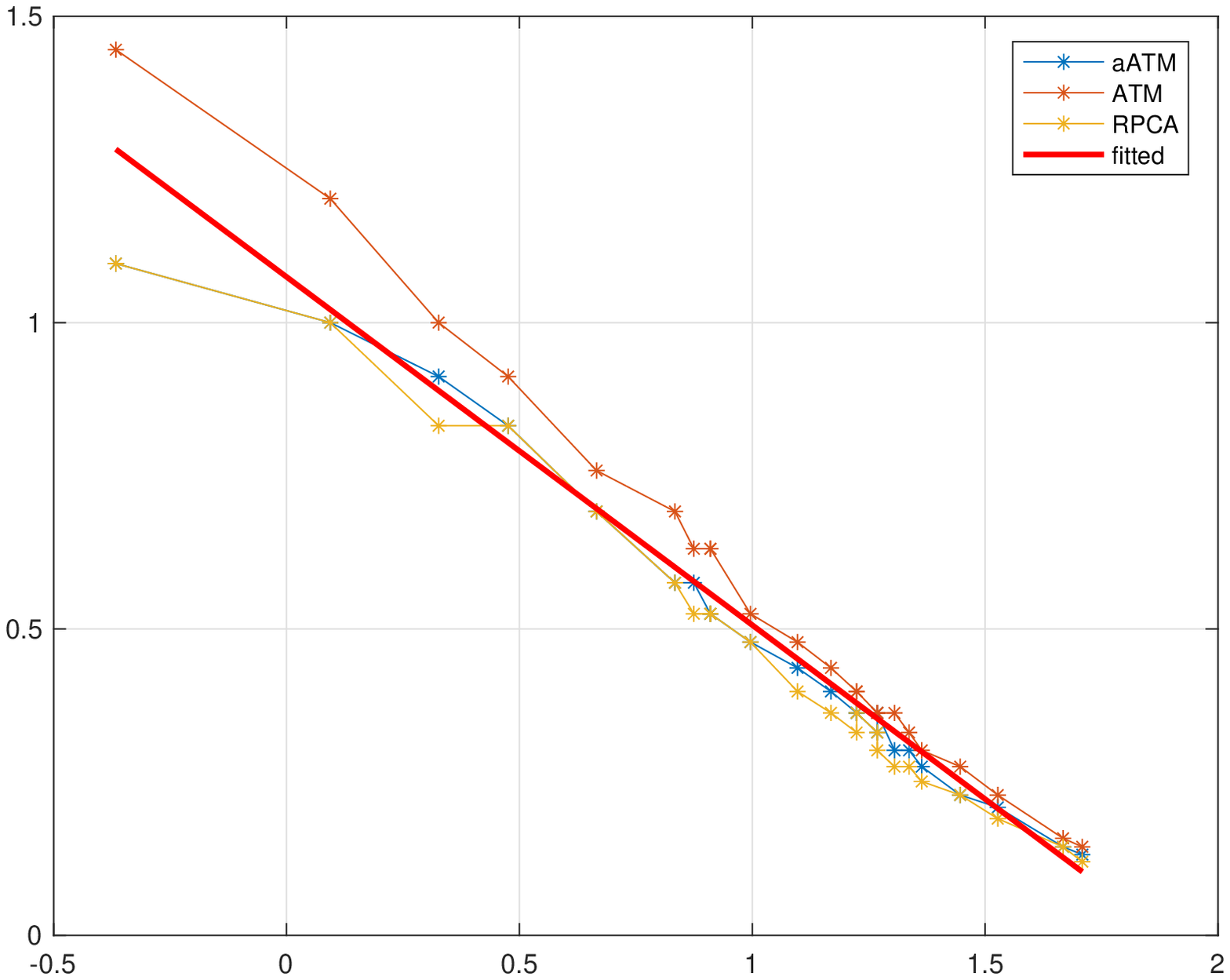}
		\caption{$\sqrt d\lambda^*$ v.s $n=2 \sim 250$. }
		\label{fig:bestlambdavsn}
	\end{center}
\end{figure}

\begin{rem}
	$\hat\lambda^*$ is estimated for all methods. However, it is possible to derive the estimator for individual method. ATM may be disadvantaged as the fit is not as good as others. $\hat\lambda^*$ should be lower bounded by the minimum value of $\lambda$ in Lemma \ref{lem:extremalambda} so 
	\[
	\hat\lambda^* = \max\{\frac{-0.5682\log(\log(n))+1.0747}{\sqrt d},\frac1{\sqrt{dn}}\}
	\]
	prevents $\lambda^*$ from being too small. Finally this $\hat\lambda^*$ is empirical and approximate with no model assumption. More sophisticated regression methods are possible.
\end{rem}

\subsection{Performance evaluation on simulations on multiple images}
Now we are ready for more comprehensive tests. One last question is how these evaluations hold across different $n$ (image sequence length) and different scenes? To this end, we picked 3 other images from Inria data set, {\tt chicago1}, {\tt kitsap1} and {\tt vienna1}, and ran the same randomised trials with $n=5, 7, 10, 12, 15, 20$, each with 50 repeats. The ground truth images are displayed in Fig. \ref{fig:simacrossimgs}. In this experiment, we bring in the state-of-the-art deep learning methods  \cite{Sarukkai_2020_WACV} (called  STGAN+Resnet and STGAN+Unet) and \cite{zhengSingleImageCloud2021} (called UNET and UNET+GAN) for a thorough comparison. 
STGAN provides two variants using Resnet and Unet backbone networks. UNET separates cloud and ground only and UNET+GAN uses GAN to fill thick cloud covered areas. 
The training of these deep learning models strictly followed the procedures in their code base repository, and were optimised for best performance as per instructions.  
For our models and RPCA, $\lambda$ was automatically determined by \eqref{e:bestlambda} for different $n$. 

Fig. \ref{fig:simacross} visaulises all $r$ values in one place, where the height of the bars are the means and error bars on top show the standard deviation calculated from multiple trials. The results of the same methods are grouped together with different coloured bars showing the results for different $n$. The overall impression is that deep learning methods are not as good although they have quite stable performance across different $n$ values. They may have some advantages when $n$ is small, say $n=5$. STGAN is better than RPCA in {\tt kitsap1} although no match for aATM and ATM when $n > 7$. Deep learning methods have large performance variations across different scenes, while others are rather consistent. All sparse models have better $r$ values when $n$ grows larger. This suggests that a strategy to boost performance is to increase the sampling frequency moderately. It makes perfect sense as more images provide more information for the missing pixels covered by clouds, and it is more likely that some areas covered in one image are not covered in another. The rank minimisation in aATM/ATM/RPCA is designed to fully utilise this. The unencessity of MC is once again verified in this test. The add-on value of MC is only observable for ATM when $n$ is small, i.e. $n<15$. 

The above observations provide us a clear clue to the questions raised at the beginning of this paper and reflect our motivation. Deep learning methods in general have lower fidelity (higher $r$ values). This uncertainty poses many questions for subsequent applications. They may have good performance on some specific scenes, for example, pure nature scene like {\tt kitsap1}. However, it is not clear how GAN's distribution transformation works. The large performance variation reveals their problems in dealing with different situations. While our models do not have these issues and interpretable in terms of their working mechanism.

\begin{figure}[htbp]
	\begin{center}
		\includegraphics[width=0.45\linewidth]{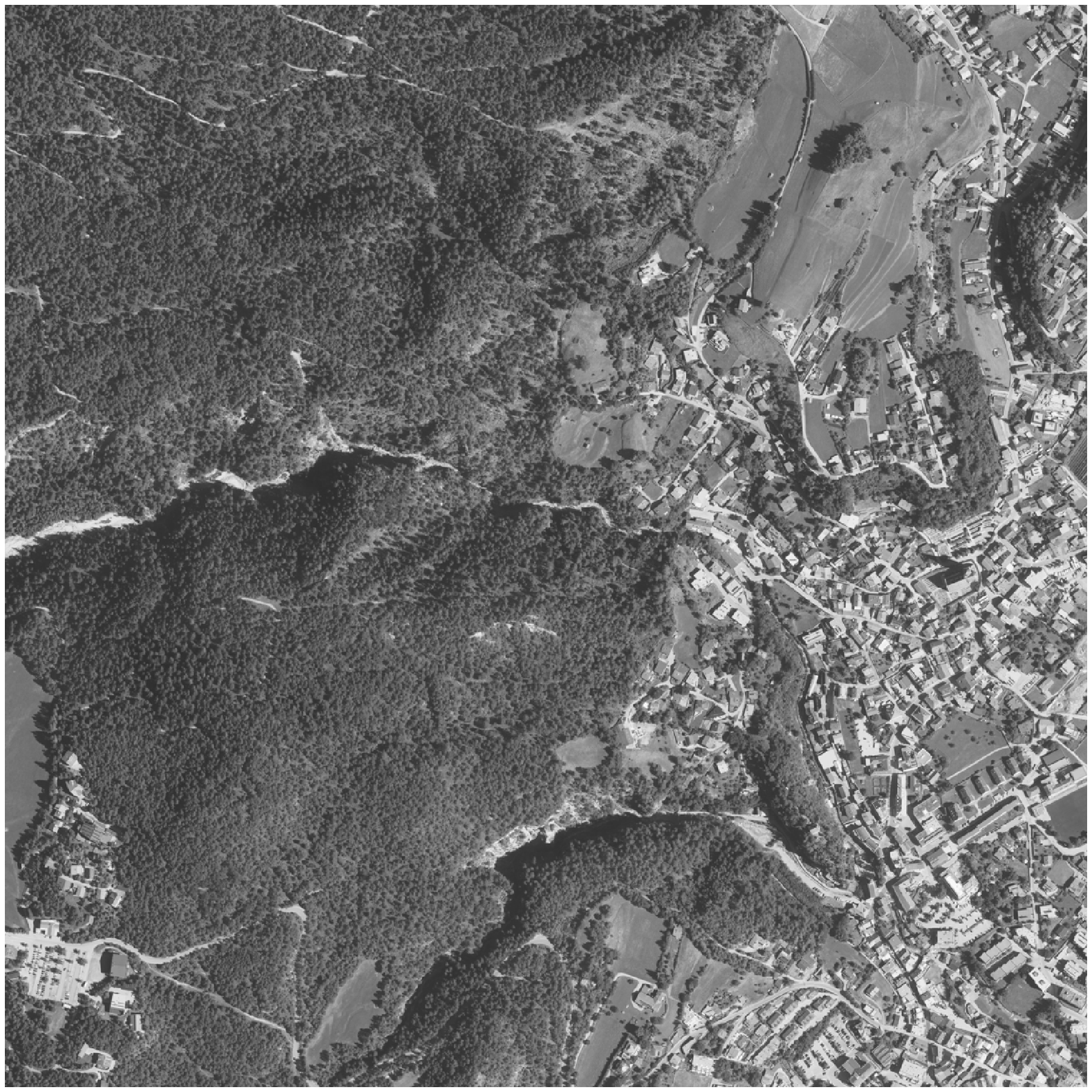}
		\includegraphics[width=0.45\linewidth]{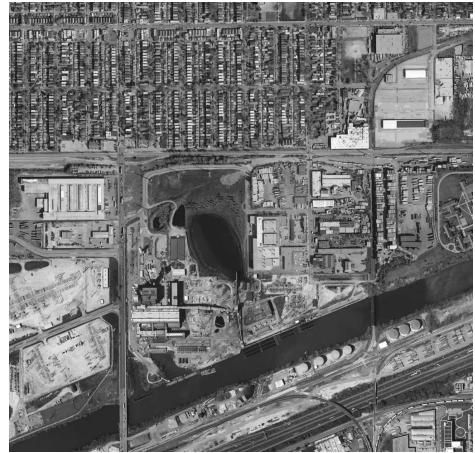} \\
		~\includegraphics[width=0.45\linewidth]{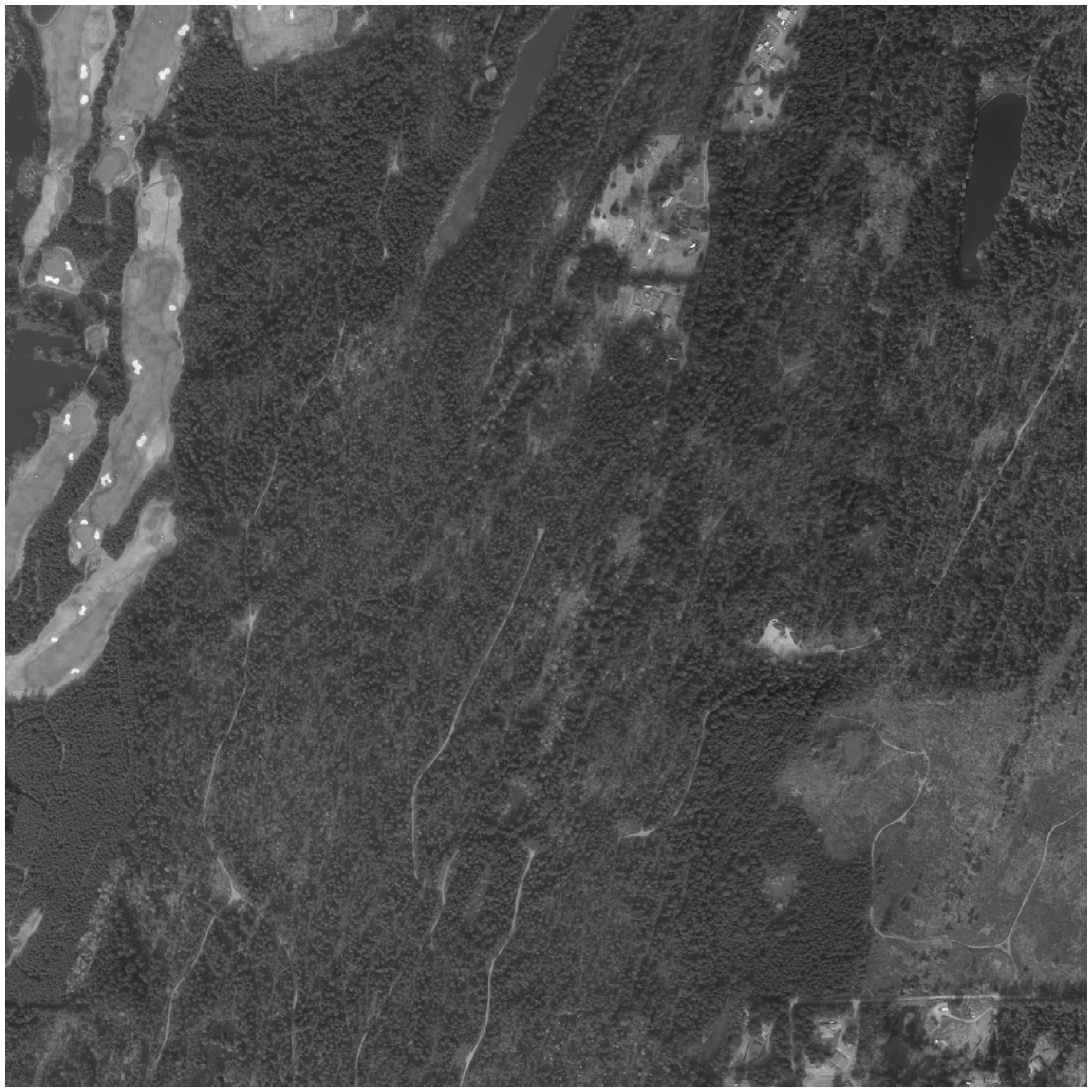}
		\includegraphics[width=0.45\linewidth]{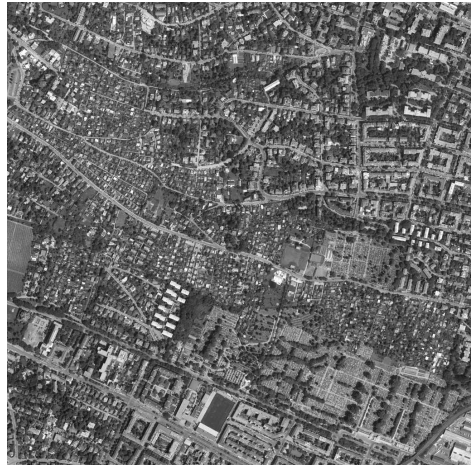}
		\caption{Four base images for randomised trials. From top to bottom and left to right:  {\tt tyrol-w1}, {\tt chicago1}, {\tt kitsap1} and {\tt vienna1}. }
		\label{fig:simacrossimgs}
	\end{center}
\end{figure}

\begin{figure*}[htbp]
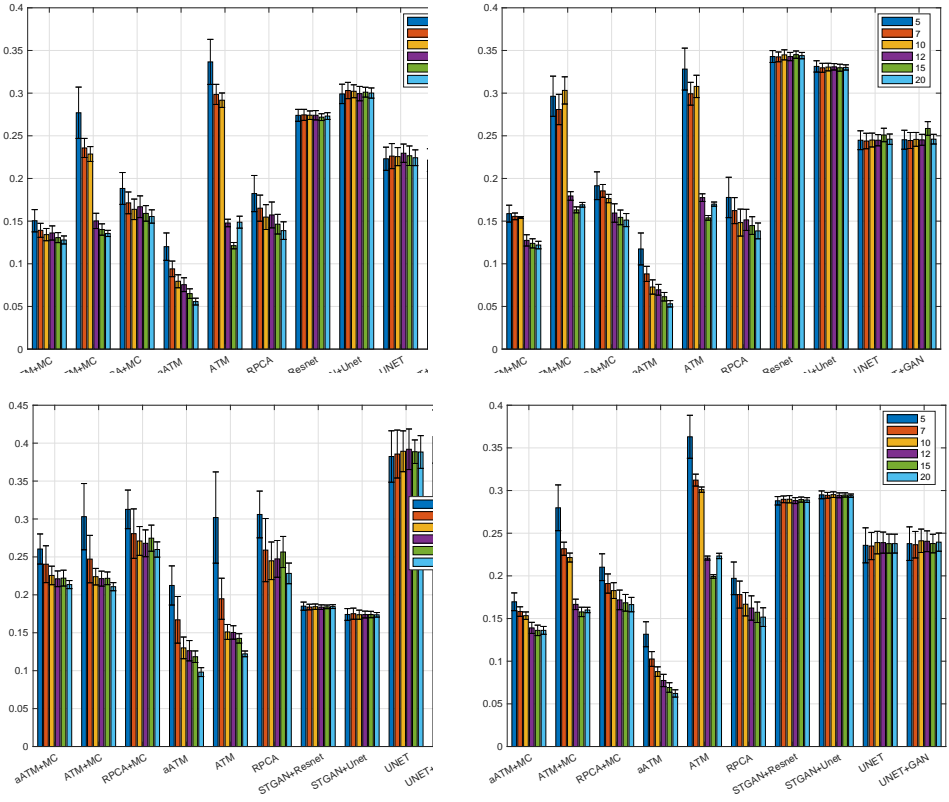

	\begin{center}
		\includegraphics[width=0.45\linewidth]{tyrol-w1allsims}
		\includegraphics[width=0.45\linewidth]{chicago1allsims}\\
		~\includegraphics[width=0.45\linewidth]{kitsap1allsims}
		\includegraphics[width=0.45\linewidth]{vienna1allsims}
		\caption{Randomised trials for various base images (in the order shown in Fig. \ref{fig:simacrossimgs}) and sequence length $n$. The legend shows $n$.   }
		\label{fig:simacross}
	\end{center}
\end{figure*}

\section{Conclusions}\label{sec:conclusion}
In this paper, we introduced atmosphere scattering model into cloud removal modelling process and proposed two ATM models as superior alternatives to RPCA based model. Furthermore, we proposed a method to simulate controllable cloud cover scenes. This semi-realistic simulation enables detailed study of various cloud removal methods, and provides valuable insights to several aspects of the algorithms as large scale randomised trial and quantitative analysis become possible. Examining the methods by using this powerful experimental tool, we saw clearly that the proposed aATM outperforms not only RPCA model,  the state-of-the-art in this category of non-deep-learning based cloud removal methods, but also latest deep learning models constructed on large scale backbone networks, by quite a large margin. There were many interesting findings in this process, for example the zoning of the regularisation parameter $\lambda$, computational cost pattern across zoning, automated regularisation parameter determination and so on. These may be out of the question without the assistance of the simulation. We envisage a robust development of the cloud removal algorithm under this framework in near future. 






%
%
%
\bibliographystyle{plain}
\bibliography{ref}

\section*{Analysis of regularisation parameter $\lambda$}\label{sec:analysis}
In this section, we focus on the theoretic analysis on the regularisation parameter $\lambda$ in the models, in particular its valid range.  
We first have the following minimum $\lambda$ value lemma. 
\begin{lem} \label{lem:extremalambda}
	For any given data in $D\in\Real^{d\times n}$ assuming $d>n$, the minimum and maximum value for $\lambda$ in ATM, aATM and RPCA model is $\frac1{\sqrt{dn}}$ and $\sqrt{n}$ respectively. The extremum is in the sense of bound for the models to generate non-trivial solutions. 
\end{lem}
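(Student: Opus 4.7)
The strategy is to sandwich the objective $\|L\|_*+\lambda\|C\|_1$ between the values it takes at the two trivial feasible pairs, $(L,C)=(D,0)$ giving $\|D\|_*$ and $(L,C)=(0,D)$ giving $\lambda\|D\|_1$, and to identify the values of $\lambda$ at which one of these trivial pairs is forced to become the global optimum. The whole argument rests on two elementary norm inequalities, valid for any $X\in\Real^{d\times n}$ with $d\ge n$: first, $\|X\|_*\le\sqrt{n}\,\|X\|_F\le\sqrt{n}\,\|X\|_1$, which follows from Cauchy--Schwarz on the at most $n$ nonzero singular values combined with $\|\cdot\|_2\le\|\cdot\|_1$ on the vectorised entries; and second, $\|X\|_1\le\sqrt{dn}\,\|X\|_F\le\sqrt{dn}\,\|X\|_*$, from Cauchy--Schwarz on the $dn$ entries combined with $\sqrt{\sum\sigma_i^2}\le\sum\sigma_i$.

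I would first treat RPCA, where the constraint reads $L+C=D$. For $\lambda>\sqrt{n}$, applying the first inequality to $C$ and then the nuclear-norm triangle inequality yields
\[
\|L\|_*+\lambda\|C\|_1 \;\ge\; \|L\|_*+\|C\|_* \;\ge\; \|L+C\|_* \;=\; \|D\|_*,
\]
with equality at $(D,0)$, so the cloud component is driven to zero. Symmetrically, for $\lambda<1/\sqrt{dn}$, applying the second inequality to $L$ gives $\|L\|_*\ge\lambda\|L\|_1$ and hence
\[
\|L\|_*+\lambda\|C\|_1 \;\ge\; \lambda(\|L\|_1+\|C\|_1) \;\ge\; \lambda\|L+C\|_1 \;=\; \lambda\|D\|_1,
\]
with equality at $(0,D)$, so the low-rank component collapses to zero. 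Thus, outside the window $[1/\sqrt{dn},\sqrt{n}]$ the RPCA optimum is trivial, establishing the claim in that case.

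The remaining task is to transport the two thresholds to aATM and ATM. For aATM this is essentially immediate: the pairs $(D,0,0)$ and $(0,D,0)$ remain feasible under $D=L+C+N$, the extra non-negative penalty $\beta\|N\|_F^2$ only tightens the lower bounds, and the above inequality chains carry through verbatim upon setting $N=0$ on the trivial side. For ATM I would use the substitution $\tilde L=L\circ(1-C)$; under the box constraints $L,C\in[0,1]$ one has $\tilde L\in[0,1]$ and $D=\tilde L+C$, so the feasible set projects into an RPCA-type feasible set in which $(D,0)$ and $(0,D)$ are still achievable. The hardest step I anticipate is exactly this ATM extension, because the nuclear-norm triangle inequality does not transport cleanly through the Hadamard product $L\circ(1-C)$; I expect to recover the required bound on $\|L\|_*$ from $\|\tilde L\|_*$ either via the entrywise domination $|\tilde L|\le|L|$ to control the Frobenius norms that sit in the middle of the two inequality chains, or by a direct subgradient/SVT-type argument at the trivial points. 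Any mild loss of constant in this step is consistent with the empirical Goldilock-zone widths reported in Section~\ref{sec:exp} and does not alter the outer bounds $\sqrt n$ and $1/\sqrt{dn}$.
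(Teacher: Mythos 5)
Your route is genuinely different from the paper's. The paper argues dually: it writes the stationarity condition $UV^\top + W + \lambda\,\mathrm{sign}(C^*)=0$ and asks for which $\lambda$ the nuclear-norm subgradient can fit inside the entrywise box $[-\lambda,\lambda]$, obtaining $1/\sqrt{dn}$ from the minimal-Frobenius-norm choice $UV^\top=\tfrac1{\sqrt{dn}}\mathbf 1$ and $\sqrt n$ from the maximal one. You argue primally, comparing objective values at feasible points via the inequalities $\|X\|_*\le\sqrt n\,\|X\|_1$ and $\|X\|_1\le\sqrt{dn}\,\|X\|_*$, both of which are correct. For RPCA your argument is complete and in fact sharper than the paper's: it proves, rather than suggests, that outside $[1/\sqrt{dn},\sqrt n]$ one of the two components must vanish, whereas the paper's corner-of-the-box reasoning is heuristic. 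What the paper's dual approach buys is uniformity (one optimality condition covers all three models at once, at the price of rigour); what yours buys is an elementary, self-contained certificate for RPCA.

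There are, however, two gaps in the transport to the other models. For aATM your chain does not close as stated: with $D=L+C+N$ you only obtain $\|L\|_*+\lambda\|C\|_1+\beta\|N\|_F^2\ge\|D-N\|_*+\beta\|N\|_F^2$, and this is \emph{not} bounded below by $\|D\|_*$ (take $N=\epsilon D$: the linear loss $\epsilon\|D\|_*$ beats the quadratic gain $\beta\epsilon^2\|D\|_F^2$ for small $\epsilon$), so $(D,0,0)$ is not the large-$\lambda$ optimum and ``setting $N=0$ on the trivial side'' is not legitimate. The repair is to hold $N^*$ fixed and swap mass only between $L$ and $C$: replacing $(L^*,C^*,N^*)$ by $(L^*+C^*,0,N^*)$ strictly decreases the objective when $\lambda>\sqrt n$ unless $C^*=0$, and symmetrically at the lower threshold; this yields exactly the ``non-trivial solution'' claim and is consistent with the paper's remark that the aATM lower bound is really the $\beta=\infty$ case. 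For ATM the step you flag as hard is a genuine obstruction, not a constant-chasing issue: entrywise domination $|\tilde L|\le|L|$ does \emph{not} imply $\|\tilde L\|_*\le\|L\|_*$ (the nuclear norm is not monotone under entrywise domination), so neither of your proposed repairs closes the argument. You would need the same swap idea evaluated at $C=0$ (where $\tilde L=L$) or a direct stationarity argument. To be fair, the paper simply asserts the RPCA subgradient condition for ATM as well, silently dropping the $(1-C)$ and $(1-L)$ factors, so neither proof is fully rigorous on that point.
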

\begin{proof}
	The optimality condition of both RPCA and ATM models requires 
	\[
	\partial{\|L^*\|_*}+\lambda\partial{\|C^*\|_1} = 0
	\]
	leading to 
	\be\label{e:optcond}
	UV^\top + W + \lambda sign(C^*) = 0 
	\ee
	where superscribed star $^{*}$ stands for the optimal value, $L^* = U\Sigma V^\top$ is the skinny SVD of $L^*$ (i.e. $U\in\Real^{d\times l}=[u_1,\ldots,u_l]$, $V\in\Real^{n\times l}=[v_1,\ldots,v_l]$, and $\Sigma=I_l$, identity matrix of size $l\times l$) and $W\in\Real^{d\times n}$ is any matrix satisfying 
	\[
	\|W\|_2 \le 1,\ U^\top W = 0,\ WV=0. 
	\]
	It is easy to see that $W=U_{\perp}EV_{\perp}^\top$, where $A_{\perp}$ is the complementary components in ambient space that is orthogonal to matrix $A$, and $E$ is a diagonal matrix with all element $e_j\in[0,1]$ to satisfy $\|W\|_2 \le 1$. With the re-wrting Eq. \eqref{e:optcond}, we are seeking  
	\be\label{e:optcond1}
	\inf_\lambda\{\lambda \ge 0:\sum_{i=1}^lH_i + \sum_{j>l}^ne_jH_j = \lambda sign(-C^*)\}
	\ee
	where as $H_i=u_iv_i^\top$. $H_i$'s are orthogonal to each other and unitary in terms of Frobenius norm. Eq. \eqref{e:optcond1} shows that the subgradient of $\|L\|$ at $L^*$ is clamped by $\lambda$ regardless $C^*$, meaning the elements in the left hand side of \eqref{e:optcond1} have to be in the range of $[-\lambda, \lambda]$, a boxed condition. The largest norm within the $\lambda$ box is at one of its corners. Without loss of generality, we can choose the first orthant corner. According to Pethagorean, adding an orthogonal component to a vector will only increase the norm. Hence to allow as large as possible for $\lambda$, one can seek the vector with smallest norm, which reflected to the situation in \eqref{e:optcond1} is to let $l=1$ and set $\forall j\ e_j=0$. It is equivalent to choose $U$ and $V$ to be vectors of all $1/\sqrt{d}$ and all $1/\sqrt{n}$ respectively and let $W=0$. In this case, $ UV^\top + W=\frac1{\sqrt{dn}}\mat 1$ where $\mat 1$ is matrix of all one's with compatible dimensions. This is the smallest norm the subgradient of $L$ can fit in the $\lambda$ box. Therefore, the infimum in \eqref{e:optcond1}, i.e. what is required in this lemma is  $\frac1{\sqrt{dn}}$.
	
	Similarly the maximum is 
	\be\label{e:optcond2}
	\sup_\lambda\{\lambda \ge 0:\sum_{i=1}^lH_i + \sum_{j>l}^ne_jH_j = \lambda sign(-C^*)\}. 
	\ee
	The only difference is that one has to consider all possibilities, i.e. the maximum of the norm. Therefore, \eqref{e:optcond2} is equivalent to 
	\begin{align*}
	\max\{\|\sum_{i=1}^lH_i + \sum_{j>l}^ne_jH_j \|_F\}=\sqrt{\tr{V_nU_n^\top U_nV_n^\top}} =\sqrt{n}
	\end{align*}
	where $U_n=[u_1, \ldots,u_n]$ and $V_n$ likewise. 
	
\end{proof}
\begin{rem}
	From above, we can see that, when $\lambda<\frac1{\sqrt{dn}}$, the only allowed solution is to nullify the elements in $L$, in which case, $L^*=0$ and $C^*=D$. This is what we have seen in Fig. \ref{fig:sim7}, where when $\lambda$ is very small, $r$ value is 1 as $L=0$. aATM has the same result although it has another regularisation because  the infimum of $\lambda$ happens only when $\beta=\infty$, otherwise it would further reduce the value of $\lambda$. 
	
	This matches the purpose of regularisation.  When $\lambda$ is too small, the penalty to sparsity is next to null. Hence the sparse component is free. The sensible choice is of course to set the low rank component zeros, such that the objective is quite small, although this is a trivial solution. Similar logic for maximum value of $\lambda$. 
\end{rem}

Note that the extrema values of $\lambda$ deduced in Lemma \ref{lem:extremalambda} is for general cases, in other words, no specific conditions. The minimum value of $\lambda$ is very close to the recommended value of $\lambda$ in RPCA, while the maximum is rather loose, due to the generality. Actually we can have the following tighter upper bound of $\lambda$. 
\begin{lem} \label{lem:tightermaxlambda}
	For a given $D\in\Real^{d\times n}$ assuming $d>n$, the maximum value for $\lambda$ in ATM, aATM with $\beta=\infty$ and RPCA model, written as $\lambda_m$, is $\|UV^\top\|_\infty$ where $U$ and $V$ is from the skinny SVD of $D=U\Sigma V^\top$ and $\|A\|_\infty$ is the matrix infinity norm, i.e. the maximum absolution value of all its elements.
\end{lem}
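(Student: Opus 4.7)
The plan is to identify the threshold above which the trivial solution $C^\ast = 0$ becomes optimal, since that ``no cloud detected'' regime is exactly what defines the upper bound in the sense of Lemma~\ref{lem:extremalambda}. First I would observe that $C^\ast = 0$ forces $L^\ast = D$ in all three models: for RPCA this is immediate from $D = L + C$; for ATM the constraint $D = L\circ(1-C) + C$ collapses to $L = D$ when $C = 0$; and for aATM with $\beta = \infty$ the Frobenius penalty pushes $N$ to $0$ and the constraint reduces to the RPCA one. So for every model the candidate trivial optimum is $(L^\ast, C^\ast) = (D, 0)$, and the question becomes: for which $\lambda$ does this point satisfy the subdifferential optimality condition?

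Next I would reuse the optimality equation already written in the proof of Lemma~\ref{lem:extremalambda}, namely $UV^\top + W + \lambda\,\mathrm{sign}(C^\ast) = 0$, but now evaluated at $(L^\ast, C^\ast) = (D, 0)$, so that $D = U\Sigma V^\top$ is the skinny SVD of the \emph{data} itself and $W = U_\perp E V_\perp^\top$ ranges over the admissible set with diagonal $E$ satisfying $e_j \in [0,1]$. Because the subdifferential of $\|\cdot\|_1$ at $0$ is the full entrywise infinity-norm unit ball, the sign term gets replaced by an arbitrary $G$ with $\|G\|_\infty \le 1$, and the KKT requirement reduces to the existence of admissible $W$ and $G$ with $UV^\top + W = -\lambda G$.

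The third step extracts the threshold. Taking the simplest admissible choice $W = 0$ (all $e_j = 0$) and setting $G = -UV^\top/\lambda$ gives the sufficient condition $\|UV^\top\|_\infty \le \lambda$, so any $\lambda \ge \|UV^\top\|_\infty$ certifies optimality of the trivial solution, matching the claim. For the matching necessity---that $\lambda_m$ cannot be made strictly smaller---I would argue that for $\lambda < \|UV^\top\|_\infty$ the constraint $\|UV^\top + W\|_\infty \le \lambda$ cannot be met by any admissible $W$, because $W$ lies in the row and column orthogonal complements of $UV^\top$ and therefore cannot cancel its extremal entry without violating the operator-norm bound $\|W\|_2 \le 1$.

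The main obstacle is precisely this necessity step: certifying that no admissible $W = U_\perp E V_\perp^\top$ shrinks $\max_{ij}|[UV^\top + W]_{ij}|$ below $\|UV^\top\|_\infty$. A clean route is a duality argument, testing against the sign matrix $G^\star$ concentrated on the entry of $UV^\top$ achieving the infinity norm and showing that $\langle W, G^\star\rangle$ is controlled---vanishing when $G^\star$ lies in the row and column spans of $U$ and $V$, and otherwise bounded via $\|W\|_2 \le 1$ together with the subspace orthogonality of $W$. A secondary subtlety is that the ATM problem is nonconvex, so KKT is only necessary; this is benign here because the argument only certifies the trivial primal point and does not rely on convexity of the feasible set.
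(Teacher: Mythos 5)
Your sufficiency argument coincides exactly with the paper's proof: at the candidate point $(L^*,C^*)=(D,0)$ the subdifferential of $\lambda\|\cdot\|_1$ at zero is the $\lambda$-scaled entrywise $\ell_\infty$ ball, and choosing $W=0$ (i.e.\ all $e_j=0$ in the notation of Lemma~\ref{lem:extremalambda}) reduces the stationarity condition to requiring $UV^\top$ to sit in the $[-\lambda,\lambda]$ box entrywise, which holds precisely when $\lambda\ge\|UV^\top\|_\infty$. That is all the paper does; it never asks whether a smaller $\lambda$ could also certify the trivial solution through a nonzero $W$.

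The necessity step you add --- and correctly flag as the main obstacle --- is where the genuine gap lies, and your sketch does not close it. The actual threshold for $(D,0)$ to satisfy the first-order conditions is $\min_{W}\|UV^\top+W\|_\infty$ over admissible $W$ (those with $U^\top W=0$, $WV=0$, $\|W\|_2\le1$), and this minimum can be strictly smaller than $\|UV^\top\|_\infty$: the orthogonality conditions constrain the column and row spaces of $W$, not its individual entries, so nothing prevents an admissible $W$ from carrying an entry of opposite sign at the location $(i_0,j_0)$ where $\abs{[UV^\top]_{i_0j_0}}$ is maximal, pulling the extremal entry down without violating $\|W\|_2\le1$. Your proposed duality test against a certificate $G^\star$ supported on that single entry yields $\langle W,G^\star\rangle=[W]_{i_0j_0}$, which neither vanishes nor is controlled by $\|W\|_2\le1$ together with subspace orthogonality in the way your argument needs. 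So what is actually provable by your first step (and what the paper proves) is one-sided: $\lambda\ge\|UV^\top\|_\infty$ suffices for the trivial solution, i.e.\ $\lambda_m\le\|UV^\top\|_\infty$; tightness would require ruling out every admissible $W$, which neither you nor the paper does. The secondary caveat you raise yourself is also real but, as you say, tangential: for the nonconvex ATM constraint, stationarity at $(D,0)$ is only a KKT-level certificate, not a proof of global optimality.
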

\begin{proof}
	The proof is similar to that of Lemma \ref{lem:extremalambda}  except now we consider only one data set $D$, i.e. 
	\be\label{e:optcond3}
	\sup_\lambda\{\lambda \ge 0:\sum_{i=1}^lH_i \in [-\lambda,\lambda]\}, 
	\ee 
	as the sparse component $C^*$ is totally zero. $\sum_{i=1}^lH_i=UV^\top$ and all its elements are surely less than or equal to $\|UV^\top\|_\infty$. Therefore when $\lambda=\|UV^\top\|_\infty$, the condition in \eqref{e:optcond3} holds. This is equivalent to setting $\forall j,\ e_j=0$ as in \eqref{e:optcond2}. 
\end{proof}

The upper bound of $\lambda$ in Lemma \ref{lem:tightermaxlambda} is much better than that in Lemma \ref{lem:extremalambda}, especially when $d\gg n$ and $D\in[0,1]$. However it is possible to further quantify $\lambda=\|UV^\top\|_\infty$ without actual SVD. We give asymptotic results here of the upper bound of $\lambda=\|UV^\top\|_\infty$ and hence $\lambda_m $. To proceed, we need the following proposition to bound $\|UV^\top\|_\infty$. 
\begin{prop}\label{prop:inftynormconnection}
	For any matrix $A$ of size $d\times n$ ($d>n$)and its skinny SVD as $A= U\Sigma V^\top$, the following holds
	\[
	\|UV^\top\|_\infty \le \frac{\|A\|_\infty}{\sigma_{n}(A)}
	\]
	where $\sigma_{i}(A)$ is the $i$th largest singular value of $A$ and then $\sigma_{n}(A)$ is the smallest singular value of $A$.
\end{prop}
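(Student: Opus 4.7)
The plan is to reformulate $UV^\top$ in a form that makes explicit both the data $A$ and the inverse singular value structure, then bound individual entries via Cauchy--Schwarz combined with a priori structural estimates.

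First, I would use the skinny-SVD identity $AV = U\Sigma$ to solve for $U = AV\Sigma^{-1}$, and substitute into the target:
\[
UV^\top \;=\; AV\Sigma^{-1}V^\top \;=\; A\,M, \qquad M:=V\Sigma^{-1}V^\top=(A^\top A)^{-1/2}.
\]
This identifies $UV^\top$ as the orthogonal factor in the polar decomposition of $A$, and exhibits $M$ as a symmetric positive definite matrix with spectrum $\{1/\sigma_k(A)\}_{k=1}^n$ and operator norm $\|M\|_{\mathrm{op}} = 1/\sigma_n(A)$.

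Second, I would fix an arbitrary index pair $(i,j)$ and rewrite the entry as an inner product between the $i$-th row of $A$ and the $j$-th column of $M$,
\[
(UV^\top)_{ij} \;=\; (A^\top e_i)^\top\,M e_j.
\]
Cauchy--Schwarz together with the operator norm bound $\|Me_j\|_2 \le \|M\|_{\mathrm{op}}\|e_j\|_2 = 1/\sigma_n(A)$ then gives $|(UV^\top)_{ij}| \le \|A^\top e_i\|_2 / \sigma_n(A)$, while the entries of the vector $A^\top e_i$ are all bounded in absolute value by $\|A\|_\infty$ by definition.

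The hard part is closing the gap between the row $\ell_2$ norm of $A$ and the elementwise $\ell_\infty$ norm without incurring a spurious dimensional factor: the naive estimate $\|A^\top e_i\|_2 \le \sqrt n\,\|A\|_\infty$ would only yield $\sqrt n\,\|A\|_\infty/\sigma_n(A)$. The key to eliminating this extra $\sqrt n$ is the a priori bound $\|UV^\top\|_\infty \le 1$, which follows from $(UV^\top)^\top(UV^\top)=I_n$ (every column of $UV^\top$ has unit $\ell_2$ norm). In the regime $\sigma_n(A) \le \|A\|_\infty$ this a priori bound already subsumes the claim. In the complementary regime $\sigma_n(A) > \|A\|_\infty$, I would invoke the polar identity $A = (UV^\top)\,(A^\top A)^{1/2}$ with $(A^\top A)^{1/2}\succeq \sigma_n(A)\,I_n$; this produces an entrywise relationship between $A$ and $UV^\top$ in which the dominant scale factor is exactly $\sigma_n(A)$, so that rearranging gives $|(UV^\top)_{ij}|\le \|A\|_\infty/\sigma_n(A)$. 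Joining the two regimes completes the bound, and it matches the tight cases (e.g.\ $A$ a scaled partial isometry) that motivate the proposition.
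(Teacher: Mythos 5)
Your opening moves are sound and are, in spirit, the same reduction the paper attempts: $UV^\top=A(A^\top A)^{-1/2}$, Cauchy--Schwarz entrywise, and the a priori bound $\|UV^\top\|_\infty\le 1$, which correctly settles the regime $\sigma_n(A)\le\|A\|_\infty$. The genuine gap is your treatment of the complementary regime. The sentence ``this produces an entrywise relationship between $A$ and $UV^\top$ in which the dominant scale factor is exactly $\sigma_n(A)$'' is not an argument: the Loewner inequality $(A^\top A)^{1/2}\succeq\sigma_n(A)I_n$ controls quadratic forms and $\ell_2$ norms of images, not individual entries of $A(A^\top A)^{-1/2}$. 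What you would actually need is that the $\ell_\infty\to\ell_\infty$ operator norm of the symmetric matrix $(A^\top A)^{-1/2}$ (its maximum absolute row sum) is bounded by its spectral norm $1/\sigma_n(A)$; for a symmetric positive definite matrix the former can exceed the latter by up to a factor of $\sqrt n$, which is exactly the factor you were trying to remove. So the step fails precisely where it is supposed to do the work.

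Moreover, the gap cannot be patched, because the inequality is false in that regime. Take $n=3$, $T=I_3+\tfrac12(e_1e_2^\top+e_2e_1^\top+e_1e_3^\top+e_3e_1^\top)$ (eigenvalues $1\pm 1/\sqrt2$ and $1$), $S=T^{-1}$, and let $Q\in\Real^{d\times 3}$ have orthonormal columns whose first row is $v^\top$ with $v=(2,\tfrac32,\tfrac32)^\top/\sqrt{8.5}$ and whose remaining entries are $O(1/\sqrt d)$ (achievable for large $d$). Then $A=QS$ has full column rank, its polar factor is $UV^\top=Q$, $\sigma_3(A)=\lambda_{\min}(S)=2-\sqrt2\approx0.586$, and since $Sv=\mathbf 1/\sqrt{8.5}$ one gets $\|A\|_\infty=1/\sqrt{8.5}\approx0.343$ for $d$ large, while $\|UV^\top\|_\infty=2/\sqrt{8.5}\approx0.686$; the claimed bound is violated by the factor $2(2-\sqrt2)\approx1.17$, and indeed $\sigma_3(A)>\|A\|_\infty$, so this sits squarely in your second regime. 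For what it is worth, the paper's own proof stalls at the same point: its assertion that any matrix spanned by the $u_iv_i^\top$ with Frobenius norm at least $\sqrt n$ must have max-entry at least $\|UV^\top\|_\infty$ is the same unjustified (and false) claim in different clothing. What your correct steps do establish is $\|UV^\top\|_\infty\le\min\{1,\sqrt n\,\|A\|_\infty/\sigma_n(A)\}$, which is the honest version of the proposition and still yields the downstream asymptotic bound on $\lambda_m$ up to a factor of $\sqrt n$.
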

\begin{proof}
	Following the same way of thinking from previous lemmas, we see that $\|UV^\top\|_\infty$ can only bound the Frobenius norm of the matrix spanned by the same bases up to $\sqrt n$. In other words, for any matrix of size $d\times n$, if $\|X\|_F\ge\sqrt n$, then $\|X\|_\infty\ge\|UV^\top\|_\infty$. Also $\|\sum_i \sigma_iu_iv_i^\top\|_F^2=\sum_i \sigma_i^2$. Therefore, if $\sigma_i\ge1$ for all $i$, then $\|\sum_i \sigma_iu_iv_i^\top\|_F\ge\sqrt n$. Combining this observation with the fact that $\tilde A=\frac1{\sigma_n(A)} A = U\Sigma/\sigma_n V^\top$, we obtain the claim in this proposition, as we have $\|\tilde A\|_\infty>\|UV^\top\|_\infty$ since $\sigma_1(\tilde A) \ge \sigma_1(\tilde A)\ge \ldots \ge \sigma_n(\tilde A) =1$. 
\end{proof}
Now we treat the observed image matrix $D$ as a random matrix whose elements are {\it i.i.d} from uniform distribution from $[0,1]$ and hence we do not assume any further structure. We are then concerned with the smallest singular value $\sigma_n(D)$ of a non-central random matrix, i.e. the mean is non-zero. There is limited results on smallest singular values. The closest one is \cite{taoRandomMatricesDistribution2010}, which deals with centralised random matrix. Fortunately, we only need a lower bound on $\sigma_n(D)$. We use the following theorem from \cite{baiLimitSmallestEigenvalue1993}, which also appeared in \cite{vershyninIntroductionNonasymptoticAnalysis2012}. 

\begin{thm}[Bai-Yin’s law]\label{thm:bylaw}
	Let A be a $d\times n$ random matrix whose entries are independent copies of a random variable with zero mean, unit variance, and finite fourth moment. Suppose that the dimensions $d$ and $n$ grow to infinity while the aspect ratio $n/d$ converges to a constant in $[0, 1]$. Then
	\[
	\sigma_n(A) = \sqrt d - \sqrt n + o(\sqrt n),\ \sigma_1(A) = \sqrt d + \sqrt n + o(\sqrt n) 
	\] 
	almost surely.
\end{thm}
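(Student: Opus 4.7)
The plan is to establish Bai--Yin's law by way of the standard three-layer strategy in non-asymptotic random matrix theory: (i) pin down the bulk limit of the empirical spectral distribution of $\tfrac{1}{d}A^\top A$ as the Marchenko--Pastur law, (ii) localise the extreme singular values at the edges of its support, and (iii) upgrade convergence in probability to almost sure convergence using the finite fourth moment. Throughout, normalise to the Wishart-type matrix $S_d = \tfrac{1}{d} A^\top A \in \mathbb{R}^{n\times n}$, whose eigenvalues are $\sigma_i(A)^2 / d$.

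First, I would prove that the empirical spectral distribution $F_{S_d} = \tfrac{1}{n}\sum_{i=1}^n \delta_{\sigma_i(A)^2/d}$ converges weakly, almost surely, to the Marchenko--Pastur distribution with parameter $y = \lim n/d \in [0,1]$, whose support is $[(1-\sqrt{y})^2,(1+\sqrt{y})^2]$. This is classical and can be carried out either by the Stieltjes transform method, showing that the resolvent $m_d(z) = \tfrac{1}{n}\mathrm{tr}(S_d - zI)^{-1}$ satisfies a quadratic self-consistent equation in the limit, or by the method of moments, where $\mathbb{E}\,\mathrm{tr}(S_d^k)$ is expanded over closed walks on a bipartite graph and shown to be dominated (in the large-$d$ limit) by non-crossing pair partitions counted by the Narayana numbers. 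Zero mean and unit variance are what make the leading-order combinatorics match the MP moments; all other contributions vanish by dimension counting.

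Next, to turn weak convergence into convergence of the extreme eigenvalues, I would establish two-sided edge control. The easier direction is the upper edge: a high-moment bound of the form $\mathbb{E}\,\mathrm{tr}(S_d^{k_d})^{1/k_d} \to (1+\sqrt{y})^2$ with $k_d \to \infty$ slowly enough gives $\sigma_1(A)/\sqrt{d} \to 1 + \sqrt{y}$ in probability, via $\sigma_1^{2k_d} \le \mathrm{tr}(S_d^{k_d})\cdot d^{k_d}$. To promote to almost sure convergence, the trick is Yin--Bai--Krishnaiah style truncation at level $d^{1/4}/(\log d)^{\alpha}$: the finite fourth moment hypothesis ensures that truncated entries differ from originals on a set whose contribution is negligible by Borel--Cantelli, and then Markov's inequality applied to $\mathbb{E}\,\mathrm{tr}(S_d^{k_d})$ for $k_d$ of order $\log d$ gives summability of the deviation probabilities. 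The lower edge bound $\sigma_n(A)/\sqrt{d}\to 1-\sqrt{y}$ is the technically harder half, because $\mathrm{tr}(S_d^{-k})$ is not directly tractable. The classical route is to apply the same moment machinery to $(S_d - zI)^{-1}$ via the Stieltjes transform, controlling the smallest eigenvalue through a rigidity estimate that says no eigenvalue leaves $[(1-\sqrt{y})^2 - \varepsilon,(1+\sqrt{y})^2+\varepsilon]$ infinitely often.

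The main obstacle I expect is precisely this lower-edge almost sure control: the finite fourth moment hypothesis is known to be essentially sharp, and relaxing it even slightly produces outliers below the Marchenko--Pastur edge. The proof therefore hinges on delicately coupling the truncation level with the moment order $k_d$ so that (a) the truncated matrix's high moments still concentrate at the MP edge, (b) the discrepancy between $A$ and its truncation is summably small in probability, and (c) the deterministic Hoffman--Wielandt perturbation bound for singular values absorbs the truncation error. Once these three pieces are balanced, Bai--Yin's conclusion $\sigma_n(A) = \sqrt{d} - \sqrt{n} + o(\sqrt{n})$ and $\sigma_1(A) = \sqrt{d} + \sqrt{n} + o(\sqrt{n})$ follows by taking square roots of the eigenvalue edges of $S_d$ and multiplying by $\sqrt{d}$.
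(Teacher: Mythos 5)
This statement is not proved in the paper at all: it is imported verbatim as a known theorem, with citations to Bai--Yin (1993) and Vershynin's survey, and is used only as a black box to lower-bound $\sigma_n(D)$ in Lemma 4.4. So there is no in-paper argument to compare yours against; the relevant comparison is with the literature proof the paper is pointing to. At that level your outline is broadly faithful to how the result is actually established --- Marchenko--Pastur bulk convergence, Yin--Bai--Krishnaiah truncation-plus-high-moment control of the top edge, and a separate, harder argument for the bottom edge --- and you correctly identify the finite fourth moment as the sharp hypothesis and the lower edge as the crux.

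However, as a proof it remains a roadmap with real gaps. First, the truncation level is off: with $\mathbb{E}\lvert a_{11}\rvert^4<\infty$ and $n\asymp d$, the entries are truncated at a level of order $\varepsilon (nd)^{1/4}\asymp \varepsilon\sqrt{d}$ (justified by $\max_{ij}\lvert a_{ij}\rvert=o((nd)^{1/4})$ a.s.), not $d^{1/4}/(\log d)^{\alpha}$; at your level the truncation error is not controllable by the fourth moment alone. Second, the classical Bai--Yin treatment of $\sigma_n$ does not go through resolvent rigidity (that is much later technology); it controls both edges at once via high moments of the recentred matrix $\mathrm{tr}\bigl((S_d-(1+y)I)^{2k_d}\bigr)$, which is the step that actually requires the delicate combinatorics. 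Third, the conclusion $\sigma_n(A)=\sqrt{d}-\sqrt{n}+o(\sqrt{n})$ is stronger than what follows from ``$\lambda_{\min}(S_d)\to(1-\sqrt{y})^2$ a.s.'' when $n/d\to 0$: taking square roots there only yields an $o(\sqrt{d})$ error, so the stated $o(\sqrt{n})$ precision needs a quantitative edge estimate rather than mere convergence to the Marchenko--Pastur support. None of this affects the paper, which only needs the qualitative lower bound on $\sigma_n$, but your proposal as written would not reconstruct the theorem in the form stated.
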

Note in Bai-Yin’s law there is no assumption on the distribution but centrality. We then write $D = A + 0.5E$ where $A$'s elements are i.i.d from uniform distribution from $[-0.5,0.5]$. We use the Courant-Fischer minimax characterisation of singular values to obtain the bound
\[
\sigma_k(A) = \max_{dim(\mathcal S) = k}\left\{ \min_{\|v\|_2=1,v\in \mathcal S} \|A v\|_2 \right\}
\]
where $\mathcal S$ is any subspace of $\Real^n$. This leads to 
\[
\sigma_n(A) =\min_{\|v\|_2=1,v\in \Real^n} \|A v\|_2
\]
as $\mathcal S$ is just $\Real^n$. We have the following lemma tailored to non-central uniform distribution. 
\begin{lem}
	Let A be a $d\times n$ random matrix whose entries are independent copies of a random  variable from uniform distribution with mean $\varepsilon$, unit variance. Suppose that the dimensions $d$ and $n$ grow to infinity while the aspect ratio $n/d$ converges to a constant in $[0, 1]$. Then
	\[
	\sigma_n(A) \ge \sqrt d - \sqrt n + o(\sqrt n)
	\] 
	almost surely.
\end{lem}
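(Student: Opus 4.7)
The plan is to reduce the claim to the centered Bai--Yin result (Theorem~\ref{thm:bylaw}) and then argue that the rank-one mean perturbation does not disturb the bottom edge of the singular spectrum. Writing $A = \tilde A + \varepsilon\,\mathbf{1}_d\mathbf{1}_n^\top$, the matrix $\tilde A$ has i.i.d.\ zero-mean, unit-variance entries, so almost surely $\sigma_n(\tilde A) = \sqrt{d} - \sqrt{n} + o(\sqrt{n})$ by Bai--Yin.

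To transfer this bound to $A$, I would use the Courant--Fischer identity $\sigma_n(A) = \min_{\|v\|_2=1}\|Av\|_2$ and split any unit $v$ as $v = \alpha e_1 + \sqrt{1-\alpha^2}\,w$, with $e_1 = \mathbf{1}_n/\sqrt{n}$, $w \perp \mathbf{1}_n$, and $\alpha = e_1^\top v$. Since $\mathbf{1}_d \mathbf{1}_n^\top w = 0$, this gives
\[
\|Av\|_2^2 = \|\tilde A v\|_2^2 + 2\alpha\varepsilon\sqrt{n}\,\mathbf{1}_d^\top \tilde A v + \alpha^2 \varepsilon^2 n d,
\]
and a case analysis on $|\alpha|$ would conclude the proof. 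When $|\alpha|$ stays away from zero, the quadratic term $\alpha^2 \varepsilon^2 nd$ has order $nd$, which dominates both $(\sqrt{d}-\sqrt{n})^2 = O(d)$ and the cross term (bounded by $O(|\alpha|\,n\sqrt{d})$ via Cauchy--Schwarz together with $\|\tilde A^\top \mathbf{1}_d\|_2 \sim \sqrt{nd}$ from concentration of a sum of independent entries), so $\|Av\|_2$ greatly exceeds $\sqrt{d}-\sqrt{n}$. When $\alpha \to 0$, the vector $v$ is asymptotically contained in $V_0 = \{v : \mathbf{1}_n^\top v = 0\}$, on which $A$ acts as $\tilde A$; combining $\|\tilde A v\|_2 \ge \sigma_n(\tilde A)$ with Bai--Yin's law then delivers the claim.

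The hard part is the intermediate regime $\alpha = O(n^{-1/2})$, where the cross term and the quadratic term are of the same order as the discrepancy between $(\sqrt{d}-\sqrt{n})^2$ and $\sigma_n(\tilde A)^2$, so a bare triangle-inequality or Weyl bound yields only $\sigma_n(A) \ge \sqrt{d - 2\sqrt{dn}}$, which is strictly weaker than the claim for $n/d > 1/4$. A sharp bound requires exploiting the rank-one structure explicitly, for example by Poincar\'e separation applied to $V_0$ together with singular-value interlacing under rank-one perturbations, or by appealing to the Baik--Ben~Arous--P\'ech\'e phenomenon, which asserts that rank-one deformations of sample-covariance spectra create a spike only at the top of the spectrum while leaving the bottom Marchenko--Pastur edge fixed to within $o(\sqrt{n})$.
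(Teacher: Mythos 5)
Your decomposition $A=\tilde A+\varepsilon\mathbf{1}_d\mathbf{1}_n^\top$, the reduction to Bai--Yin via Courant--Fischer, and the expansion of $\|Av\|_2^2$ into a centered term, a cross term and a rank-one quadratic term are exactly the skeleton of the paper's own proof. The difference is that you carry the cross term honestly and the paper does not: the paper writes $\tilde A^\top E=(s_1,\dots,s_n)^\top\mathbf{1}_n^\top$ with $s_i$ a column sum of $\tilde A$ and then asserts that ``under asymptotic condition, $s_i=0$ almost surely and hence the third term vanishes.'' That assertion is what closes the paper's argument, and it is not justified as stated: $s_i$ is a sum of $d$ independent centered unit-variance entries, so it is of order $\sqrt d$ (only $s_i/d\to 0$). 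Your estimate $\|\tilde A^\top\mathbf{1}_d\|_2\sim\sqrt{nd}$ and the resulting cross-term size $O(|\alpha|\,n\sqrt d)$ is the correct accounting, and it shows that for $|\alpha|\asymp n^{-1/2}$ the cross term is of order $\sqrt{nd}$, the same order as the slack $o(\sqrt n)\cdot\sqrt d$ permitted by the statement, so it genuinely competes with the quantity being bounded.

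That said, your proposal does not close this regime either: it ends by listing candidate tools (Poincar\'e separation, rank-one interlacing, BBP) without executing any, and the two cases you do treat ($|\alpha|$ bounded away from zero, $\alpha\to 0$) do not cover $|\alpha|\asymp n^{-1/2}$, so as written there is a genuine gap. One way to close it inside your own framework is to minimise the sum of the cross and quadratic terms over $\alpha$ for fixed $v$: bounding $|\mathbf{1}_d^\top\tilde Av|\le\|\tilde A^\top\mathbf{1}_d\|_2\approx\sqrt{nd}$ by Cauchy--Schwarz, the function $\varepsilon^2 nd\,\alpha^2-2\varepsilon n\sqrt d\,|\alpha|$ attains its minimum $-n$ at $|\alpha|=1/(\varepsilon\sqrt d)$, so $\sigma_n(A)^2\ge\sigma_n(\tilde A)^2-n$. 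This delivers the claimed $\sqrt d-\sqrt n+o(\sqrt n)$ whenever $n=o(d)$, which is the regime of the paper's application ($n\ll d$), but in the proportional regime $n/d\to c>0$ the crude Cauchy--Schwarz bound is not enough and you would indeed need the decorrelation of $\tilde A^\top\mathbf{1}_d$ from the bottom singular subspace, i.e.\ an argument of the BBP type you allude to. In short: same route as the paper, more honest bookkeeping, but the step the paper gets wrong is precisely the step you leave open.
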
\label{lem:lowerboundsmallestSV}
\begin{proof}
	Let $E$ be a all 1 matrix with compatible dimensions, then $A= \tilde A + \varepsilon E$ where $\tilde A$ satisfy conditions in \ref{thm:bylaw}. We have 
	\begin{align*}
	\sigma_n(A)^2 &=\min_{\|v\|_2=1,v\in \Real^n} \|(\tilde A + \varepsilon E) v\|_2^2\\
	&\ge  \min_{\|v_1\|_2=1,v_1\in \Real^n} \|\tilde Av_1\|_2^2 + \min_{\|v_2\|_2=1,v_2\in \Real^n} \|\varepsilon Ev_2\|_2^2 \\
	&+ \min_{\|v_3\|_2=1,v_3\in \Real^n} v_3^\top\tilde A^\top E v_3
	\end{align*}
	Note we use $v_j$ $j=1,2,3$ to highlight that these minimisations are separated and hence the above holds. $\tilde A^\top E$ in the third terms gives 
	\[
	\tilde A^\top E = ( s_1, s_2, \ldots, s_n )^\top 1_n^\top
	\]
	where $s_i = \sum_j^d a_{ij}$, $a_{ij}$ is the $ij$th element in $A$ and $1_n$ is the vector with all 1 with length $n$. As $a_{ij}$'s are from centralised population, under asymptotic condition, $s_i=0$ almost surely and hence the third term vanishes. The first two terms are the square of the smallest singular values of corresponding matrices, i.e. $\tilde A$ and $E$. Since $\sigma_i(E)=0$ for $i\not = 1$, we have the required inequality by using Theorem \ref{thm:bylaw}. 
\end{proof}

Combining Lemma \ref{lem:lowerboundsmallestSV} and Proposition \ref{prop:inftynormconnection}, we have the following corollary. 
\begin{cor}\label{cor:betterlambdamax}
	Assume the elements in data matrix $D$ of size $d\times n$ is uniformly distributed in $[0,1]$ and its size grows asymptoticly as in Lemma \ref{lem:lowerboundsmallestSV}. Given the conditions in Lemma \ref{lem:tightermaxlambda}, the maximum value for $\lambda$ is almost surely
	\be\label{e:lambdamax}
	\frac{2\sqrt 3}{\sqrt d - \sqrt n + o(\sqrt n)}
	\ee
\end{cor}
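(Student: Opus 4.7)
The plan is to chain together Lemma \ref{lem:tightermaxlambda}, Proposition \ref{prop:inftynormconnection}, and an appropriately rescaled version of Lemma \ref{lem:lowerboundsmallestSV}, so that the deterministic bound $\lambda_m=\|UV^\top\|_\infty \le \|D\|_\infty/\sigma_n(D)$ becomes explicit in $d$ and $n$. First, since every entry of $D$ lies in $[0,1]$, the numerator satisfies $\|D\|_\infty \le 1$ trivially (and in fact $\|D\|_\infty \to 1$ almost surely as $d,n\to\infty$, which is all we need for the bound to be tight). The task then reduces to producing an asymptotic almost-sure lower bound on the smallest singular value $\sigma_n(D)$.

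To apply the Bai--Yin machinery, I would normalise. Write $D = 0.5\,E + \tilde A$, where $E$ is the all-ones matrix and $\tilde A$ has i.i.d.\ entries drawn from $U[-0.5,0.5]$; these entries are centered, with variance $1/12$ and finite fourth moment. Set $A = \sqrt{12}\,\tilde A$, which has i.i.d.\ entries uniform on $[-\sqrt 3,\sqrt 3]$, zero mean, unit variance, and finite fourth moment, so Theorem \ref{thm:bylaw} applies verbatim and gives $\sigma_n(A) = \sqrt d - \sqrt n + o(\sqrt n)$ a.s. Scaling back yields $\sigma_n(\tilde A) = (\sqrt d - \sqrt n + o(\sqrt n))/\sqrt{12}$ a.s.

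Next I would reuse the Courant--Fischer decomposition from the proof of Lemma \ref{lem:lowerboundsmallestSV} applied to $D = 0.5\,E + \tilde A$ rather than to a unit-variance matrix: the only real input was that the centered component has zero mean, which ensures the cross term $v^\top \tilde A^\top E\, v$ vanishes almost surely in the asymptotic regime, while $\sigma_i(E)=0$ for $i\ge 2$ kills the rank-one mean contribution. This produces $\sigma_n(D) \ge \sigma_n(\tilde A) = (\sqrt d - \sqrt n + o(\sqrt n))/\sqrt{12}$ almost surely.

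Plugging back into Proposition \ref{prop:inftynormconnection} via Lemma \ref{lem:tightermaxlambda} gives
\[
\lambda_m \le \frac{\|D\|_\infty}{\sigma_n(D)} \le \frac{\sqrt{12}}{\sqrt d - \sqrt n + o(\sqrt n)} = \frac{2\sqrt 3}{\sqrt d - \sqrt n + o(\sqrt n)},
\]
which is \eqref{e:lambdamax}. The main obstacle I anticipate is the justification of applying Bai--Yin to the shifted ensemble: Theorem \ref{thm:bylaw} itself demands a centered matrix, so I must be careful to reduce to the centered case $A$ explicitly via the decomposition above and argue that the rank-one mean perturbation $0.5\,E$ cannot shrink the smallest singular value in the limit (which is the content imported from Lemma \ref{lem:lowerboundsmallestSV}). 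Everything else is routine norm manipulation.
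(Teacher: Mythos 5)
Your proposal is correct and follows essentially the same route as the paper: it chains Lemma \ref{lem:tightermaxlambda} and Proposition \ref{prop:inftynormconnection} with a rescaled application of Lemma \ref{lem:lowerboundsmallestSV}, using $\|D\|_\infty\le 1$ and the fact that the uniform distribution on $[0,1]$ has standard deviation $1/\sqrt{12}=1/(2\sqrt3)$, so that $\sigma_n(D)\ge(\sqrt d-\sqrt n+o(\sqrt n))/\sqrt{12}$ and hence $\lambda_m\le 2\sqrt3/(\sqrt d-\sqrt n+o(\sqrt n))$. The paper compresses all of this into a one-line ``rescaling'' remark; your write-up simply makes the same argument explicit (and, incidentally, states the variance correctly where the paper's proof conflates variance with standard deviation).
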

\begin{proof}
	It is simply the rescaling result of Lemma \ref{lem:lowerboundsmallestSV} by recognising the standard uniform distribution has variance $1/\sqrt{12}$ and also $\|D\|_\infty=1$. 
\end{proof}
\begin{rem}
	Although $\lambda_m$ in Corollary \ref{cor:betterlambdamax} is asymptotic result, as the images are quite large, say $2^{10}\times 2^{10}$ in our experiments, i.e. $d=2^{20}$, the bound of $\lambda_m$ is quite good. In practice, $n$ is relatively small, typically at the order of 10, $n\ll d$. Therefore we can further simplify \eqref{e:lambdamax} to 
	\[
	\lambda_m \approx \frac{2\sqrt 3}{\sqrt d}
	\]
	That is what we see from Fig. \ref{fig:sim7} and \ref{fig:sim15} that when $\lambda$ is too large, precisely larger than 0.0032 as shown in the figure, the sparse component is erased, i.e. $C=0$. In this case, our theory predicted $\lambda_m =  \frac{2\sqrt 3}{\sqrt{2^{20}} }=0.0034$, very close to our observation. When $C=0$, $L=D$ and hence the observed images with clouds. The variations we see from the figures are due to the simulated clouds. 
\end{rem}

\end{document}